\newcommand{\declarecolor}[2]{\definecolor{#1}{RGB}{#2}\expandafter\newcommand\csname #1\endcsname[1]{\textcolor{#1}{##1}}}
\DeclareMathOperator*{\argmax}{arg\,max}
\renewcommand{\emptyset}{\varnothing}  %
\theoremstyle{plain}
\newtheorem{theorem}{Theorem}[section]
\newtheorem{lemma}[theorem]{Lemma}
\theoremstyle{definition}
\newtheorem{definition}[theorem]{Definition}
\newcommand{\onev}{\mathbf{1}}
\newcommand{\zerov}{\mathbf{0}}
\newcommand{\vtheta}{\boldsymbol{\mathit{\theta}}}
\renewcommand{\aa}{\boldsymbol{\mathit{a}}}
\newcommand{\bb}{\boldsymbol{\mathit{b}}}
\newcommand{\oyy}{\boldsymbol{\mathit{\bar{y}}}}
\newcommand{\rr}{\boldsymbol{\mathit{r}}}
\newcommand{\ww}{\boldsymbol{\mathit{w}}}
\newcommand{\xx}{\boldsymbol{\mathit{x}}}
\newcommand{\yy}{\boldsymbol{\mathit{y}}}
\renewcommand{\AA}{\boldsymbol{\mathit{A}}}
\newcommand{\oHH}{\boldsymbol{\overline{\mathit{H}}}}
\newcommand{\XX}{\boldsymbol{\mathit{X}}}
\renewcommand{\epsilon}{\varepsilon}
\DeclareMathOperator*{\argsort}{arg\,sort}
\title{Greedy PIG: Adaptive Integrated Gradients}
\author{%
Kyriakos Axiotis$^*$,
Sami Abu-al-haija$^*$,
Lin Chen,
Matthew Fahrbach,
Gang Fu \\
Google Research\\
\texttt{\{axiotis, haija, linche, fahrbach, thomasfu\}@google.com} \\
}
\date{}
\begin{document}

\maketitle
\def\thefootnote{*}\footnotetext{Equal contribution.}\def\thefootnote{\arabic{footnote}}

\begin{abstract}
Deep learning has become the standard approach for most machine learning tasks. While its impact is undeniable, interpreting the predictions of deep learning models from a human perspective remains a challenge. In contrast to model training, model interpretability is harder to quantify and pose as an explicit optimization problem. Inspired by the AUC softmax information curve (AUC SIC) metric for evaluating feature attribution methods, we propose a unified discrete optimization framework for feature attribution and feature selection based on subset selection.
This leads to a natural \emph{adaptive} generalization of the path integrated gradients (PIG) method for feature attribution, which we call Greedy PIG.
We demonstrate the success of Greedy PIG on a wide variety of tasks,
including image feature attribution, graph compression/explanation, and
post-hoc feature selection on tabular data.
Our results show that introducing adaptivity is a powerful and versatile method for
making attribution methods more powerful.
\end{abstract}

\section{Introduction}

Deep learning sets state-of-the-art on a wide variety of machine learning tasks,
including image recognition, natural language understanding, large-scale recommender systems, and generative %
models~\citep{graves2013speech,he2016deep,krizhevsky2017imagenet,bubeck2023sparks}.
Deep learning models, however, are often opaque and hard to interpret.
There is no native procedure for \textit{explaining} model decisions to humans,
and explainability is essential when models make
decisions that directly influence people's lives,
\textit{e.g.}, in healthcare, epidemiology, legal, and education \citep{ahmad2018interpretable,wiens2018machine,chen2021ethical, abebe2022adversarial, abebe2023students}.

We are interested in two routines: (i) \emph{feature attribution} and (ii) \emph{feature selection}.
Given an input example, feature attribution techniques
offer an explanation of the model's decision
by assigning an \textit{attribution score} to each input feature.
These scores can then be directly rendered in the input space, \textit{e.g.}, as a 
heatmap highlighting visual saliency \citep{sundararajan2017axiomatic},
offering
an explanation of model decisions
that is human-interpretable.
On the other hand, feature selection methods
find the set of most-informative features for a machine learning model across many examples
to optimize its prediction quality.
While the goal of both feature attribution and feature
selection is to find the most impactful features, they are
inherently different tasks with a disjoint set of approaches and literature.
Concretely, feature attribution considers one example per invocation,
while feature selection considers an entire dataset.
For literature surveys,
see \citep{zhang2021survey} for feature attribution and 
interpretability see
and \citep{li2017feature} for feature selection.

Our main contributions are as follows:
\begin{compactitem}
\item We consider the problem of feature attribution from first principles and, inspired by evaluation metrics in the literature~\citep{kapishnikov2019xrai}, we propose a new formulation of feature attribution as an explicit subset selection problem.
\item Using this subset selection formulation, we ascribe the main shortcomings of the \emph{path integrated gradient}  (PIG) algorithms to their limited ability to handle feature correlations. This motivates us to introduce adaptivity to these algorithms, which is a natural way to account for correlations, leading to a new algorithm that we call Greedy PIG.
\item We leverage the generality of the subset selection formulation to propose different objectives for feature attribution based on model loss that can better capture the model's behavior, as well as a unification between feature attribution and feature selection formulations.
\item We evaluate the performance of Greedy PIG on a variety of experimental tasks, including feature attribution, 
graph neural network (GNN) compression, and post-hoc feature selection. Our results show that adaptivity is a powerful ingredient for improving the quality of deep learning model interpretability methods.
\end{compactitem}

\section{Preliminaries}

\subsection{Path integrated gradients (PIG)}
Let $f(\cdot; \vtheta)$ be a pre-trained neural network with parameters $\vtheta$.
Our work assumes that $f$ is pre-trained and therefore, we often omit $\vtheta$ for brevity.
Suppose $\xx \in \mathbb{R}^n$ is an input example.
The \textit{path integrated gradients} (PIG) method of \citet{sundararajan2017axiomatic} attributes the decision of $f$ given $\xx$,
by considering a \emph{line path} from the ``baseline''\footnote{\citet{sundararajan2017axiomatic} uses a black-image and \citet{kapishnikov2019xrai} averages the integral over black- and white-images. Random noise baselines have also been considered.%
} example $\xx^{(0)}$ to $\xx$:
\begin{equation}
\label{eq:pig}
\gamma : [0, 1] \rightarrow \mathbb{R}^n \ \ \ \ \ \textrm{with} \ \ \ \ \ \gamma(t; \xx^{(0)}, \xx) = \xx^{(0)} + t (\xx -  \xx^{(0)})  \ \ \ \ \ \textrm{for} \ \ \ \ t \in [0, 1].
\end{equation}
Then,
the top features $i\in [n]$ maximizing the following
(weighted) integral are chosen:
\begin{equation}
\label{eq:pig_argsort}
  \argsort_{i \in [n]} \ \ \  \left[  \ \ \  (\xx_{[i]} -  \xx^{(0)}_{[i]}) \int_0^1  \frac{\partial}{\partial \xx_{[i]}} f(\gamma(t) )  dt  \ \ \  \right].
\end{equation}
Our work addresses a weakness of PIG, which is 
that it is a one-shot algorithm.
Specifically, an invocation to PIG computes (in-parallel) Eq.~\ref{eq:pig} and chooses (at-once) the indices $\subseteq [n]$ maximizing Eq.~\ref{eq:pig_argsort}.

\subsection{Subset selection}
\label{sec:prelim_subset_selection}
Subset selection is a family of discrete optimization problems,
where the goal is to select a subset $S \subseteq [n] := \{1, 2, \dots, n\}$
that maximizes a set function $G:2^{[n]} \rightarrow \mathbb{R}$
subject to a cardinality constraint $k$:
\begin{align}
    \label{eq:framework}
    S^*
    =
    \argmax_{S \subseteq [n], |S| \le k} G(S).
\end{align}
Prominent examples of subset selection in machine learning include
data summarization~\citep{kleindessner2019fair}, feature selection~\citep{cai2018feature,elenberg2018restricted,bateni2019categorical,chen2021feature,yasuda2023sequential},
and submodular optimization~\citep{krause2014submodular,mirrokni2015randomized,fahrbach2019non,fahrbach2019submodular}.

Throughout our work,
we let $\onev$ denote $n$-dimensional all-one vector, $\onev_S \in \{0, 1\}^n$ be the indicator vector of~$S$,
and let
$\xx_S = \onev_S \odot \xx \in \mathbb{R}^n$ where $\odot$ denotes Hadamard product.

\section{Our method}
\subsection{Motivation: Discrete optimization for feature attribution and selection}
\label{sec:subset_selection}
The general subset selection framework (\S\ref{sec:prelim_subset_selection}) allows recovering several tasks.
For (i) feature attribution,
the \emph{softmax information curve (SIC)} of \cite{kapishnikov2019xrai} can be recovered from
(Eq.~\ref{eq:framework})
by setting $G(S)$ to the 
softmax output of a target class (see Eq.~\ref{eq:attribution_softmax}).
For (ii) feature selection,
we can set $G(S)$ to maximum log likelihood achievable by a model trained on a subset
of features $S$.
Finally, if one sets $G(S)$ to be
the maximum model output perturbation achieved by only
changing the values of features in $S$, one
recovers a task of finding \emph{counterfactual
explanations}.

The generality of framework (\S\ref{sec:prelim_subset_selection}) encourages us to pose the tasks (i) and (ii) as instances.
Consequently, we inherit
algorithms and intuitions from the subset
selection and submodular optimization literature.
The area of submodular optimization has a vast literature with different theoretical analyses and algorithms based on the specific properties of the set function. For example, for maximization of weakly submodular functions, it is known that the greedy subset selection algorithm achieves a constant approximation ratio. In general, determining the right realistic assumptions on the set function is a major open problem in the area.
As such, it is not yet clear which of these assumptions are realistic for each use case.
We believe this is a very important question for future work.

\subsection{Instantiations}
\paragraph{Multi-class attribution.}
Suppose that 
classifier $f$ is the softmax output of a multi-class neural network,
then
the softmax information curve of~\cite{kapishnikov2019xrai} 
can be written in (\S\ref{sec:attribution_subset_selection}; Eq.~\ref{eq:framework}) as:
\begin{equation}
\label{eq:attribution_softmax}
    G^\textsc{AttributionSoftmax}(S) =  f_{c^*}(\xx_S)\,, \ \ \ \ \textrm{with} \ \ \ \ c^* = \mathop{\arg\max}_j f_j(\xx).
\end{equation}
Maximizing $S$ in Eq.~\ref{eq:attribution_softmax} chooses feature subset $S$ that maintains the model's decision, as compared to the fully-observed example $\xx$.
Crucially, however, the top class $c^*$ doesn't tell the whole story, since the
input might be ambiguous or the classifier might be uncertain.
It is then natural that one would want to capture the behavior of $f$ on
\emph{all} classes, rather than just the top one.
A natural alternative is to re-use the objective~($\ell$)
that the model was trained with, and incorporate it into a subset  selection
problem.
Specifically, we define the set function:
\begin{equation}\label{eq:g_attribution_kl}
    G^\textsc{AttributionKL}(S) =  \ell(f(\xx), f(\xx_S)) = f(\xx)  \log f(\xx_S) \,, %
\end{equation}
where, for classification, $\ell$ is the log-likelihood.
This quantifies the similarity of probability distribution $f(\xx_S)$, i.e., considering a subset of features,
with the distribution $f(\xx)$, i.e., considering all features.
Maximizing $G(S)$ under a cardinality constraint is then equivalent
to seeking a small number of features that capture the multiclass
behavior of the model on a fixed input example.

\paragraph{Feature selection.}
Unlike feature attribution,
outputting attributions 
for \emph{a single example},
in feature selection,
the goal is to select a \emph{global set} of features for \emph{all dataset examples},
that maximize the predictive capability of the model.
Feature selection can be formulated by defining 
\begin{align}
    G^\textsc{FeatSelect}(S) = \max_{\vtheta} \ell(\yy, f(\XX_S; \vtheta))\,,
    \label{eq:feature_selection}
\end{align}
i.e., the maximum log likelihood
obtainable by training the model $f$ on the whole dataset $\XX$, given only
the features in $S$. Such formulation of feature selection
as subset selection has been
studied extensively, e.g.,
\citep{liberty2017greedy,elenberg2018restricted,bateni2019categorical,chen2021feature,yasuda2023sequential,axiotis2023performance}.
To avoid re-training $f$,
we consider a simpler problem
of \emph{post-hoc feature selection}, with $G^\textsc{FeatSelectPH}(S) = \ell(\yy, f(\XX_S; \vtheta))$ where the model parameters $\vtheta$ are pre-trained on all features.%
We leave further investigation of
(\ref{eq:feature_selection}) to future work.
The post-hoc formulation
has the advantage of making the objective differentiable,
and thus directly amenable to gradient-based approaches, while also not requiring 
access to the training process of $f$, which is seen as a black box.
This can be particularly useful for applications with limited resources
or engineering constraints that discourage running or modifying the training
pipeline. As we will show in \S\ref{sec:feature_selection},
high quality feature selection can be performed even in this restricted setting.

\subsection{Continuous extension}
To make the problem
(Eq.~\ref{eq:framework}) amenable to continuous optimization methods,
we rely on a continuous extension $g:[0,1]^n\rightarrow\mathbb{R}$
with
$G(S) := g(\onev_S)$. The extension can be derived by replacing the invocation of $f$ on $\xx_S$ or $\XX_S$ in Eq.~\ref{eq:attribution_softmax}--\ref{eq:feature_selection} by an invocation on a \textit{path},
similar to Eq.~\ref{eq:pig}, but with domain equal to the $n$-hypercube:
\begin{equation}
\label{eq:g}
g : [0, 1]^n \rightarrow \mathbb{R}^n \ \ \ \ \ \textrm{with} \ \ \ \ \     g(\mathbf{s}; \xx^{(0)}, \xx ) = \xx^{(0)} + \mathbf{s} \odot (\xx - \xx^{(0)}) \ \ \ \ \ \textrm{for} \ \ \ \ \mathbf{s} \in [0, 1]^n.
\end{equation}
This makes it easy to adapt
feature attribution methods 
from the literature to our framework and
is a lightweight assumption since
loss functions in deep learning are continuous by definition. 
In particular, the integrated gradients 
algorithm of \cite{sundararajan2017axiomatic}
can simply be stated as
computing the vector of feature scores given
by $\int_{t=0}^1 \nabla g(t\onev) dt$.

\subsection{Greedy path integrated gradients (Greedy PIG)}
We now move to
identify and improve the weaknesses of the integrated gradients algorithm.
In contrast to previous works that focus on reducing the amount of
``noise'' perceived by humans in the attribution result,\footnote{
In fact, as noticed in~\cite{smilkov2017smoothgrad}, a noisy 
attribution output is not necessarily bad, since our goal is to
\emph{explain model predictions} and not produce a human-interpretable
understanding of the \emph{data}, which is an
important but different task.
Neural net predictions do not have to conform
to human understanding of content.} we 
identify a different weakness of PIG---it is \emph{sensitive to feature correlations}.
In \S\ref{sec:attribution_subset_selection}, we show that even in
simple linear regression settings, PIG
exhibits pathologies such as outputting attribution scores that are negative,
or whose ordering can be manipulated via feature replication. This leads
us to introduce adaptivity to the algorithm.
We are motivated by the greedy algorithm for subset selection, which is known to provably tackle 
submodular optimization problems, even when there are strong correlations
between features.

\paragraph{Why Greedy captures correlations.}
A remarkable property of the greedy algorithm is that its adaptivity
property can automatically deal with correlations. Consider a set function $G$ and three elements $i, j, k$.
Suppose that greedy initially selects element $i$. Then, it will recurse on the residual
set function $G'(S)=G(S\cup\{i\})$. 
Thus, in the next phases, correlations between $i$ and the remaining elements are eliminated, since $i$ is fixed (selected). In other words, since greedy conditions on the already selected variables, this eliminates their correlations with the rest of the unselected variables.

\paragraph{Greedy PIG.}
Inspired by the adaptivity complexity of submodular maximization~\citep{balkanski2018adaptive}, 
where it has been shown that $\Omega(\log n / \log\log n)$ 
adaptive rounds of evaluating $G$ are needed to achieve provably-better solution
quality than one-shot algorithms (such as integrated gradients),
we propose a 
generalization of integrated gradients with multiple rounds.
We call this algorithm Greedy PIG since it greedily selects the top-attribution
features computed by integrated gradients in each round, \textit{i.e.} the set of $S \subseteq [n]$ maximizing the $\arg\,\textrm{sort}$ of Eq.~\ref{eq:pig_argsort},
and hardcodes their corresponding entries in $\mathbf{s}$ to 1.

\subsection{The greedy PIG algorithm}
\label{sec:greedy_pig}

Our idea is to iteratively compute the top features as attributed by integrated gradients,
and select these features by always including them in future runs
of integrated gradients (and only varying the rest of the variables along the
integration path).
We present this algorithm in detail in Algorithm~\ref{alg:greedy_pig}.

\begin{algorithm}[htb]
\caption{Greedy PIG (path integrated gradients)}
\label{alg:greedy_pig}
\begin{algorithmic}[1]
\State {\bf Input:} access to a gradient oracle for $g:[0,1]^n\rightarrow\mathbb{R}$
\State {\bf Input:} number of rounds $R$, number of selections per round $z$
\State Initialize $S \leftarrow \emptyset$ \hfill \Comment{Selected set of features}
\State Initialize $\aa \leftarrow \zerov$ \hfill \Comment{Vector of attributions}
\For{$r=1$ to $R$}
    \State Set $\hat{\aa} \leftarrow \int_{t=0}^1 \nabla_{\bar{S}} g(\onev_S+t\onev_{\bar{S}}) dt$  \label{alg_line:integral}
    \State Set  $\hat{S} \leftarrow\mathrm{top}_z(\hat{\aa})$ \hfill \Comment{Top-$z$ largest entries of $\hat{\aa}$}
    \State Update $\aa \leftarrow \aa + \hat{\aa}_{\hat{S}}$
    \State Update $S\leftarrow S\cup\hat{S}$
\EndFor
\Return{$\aa, S$}
\end{algorithmic}
\end{algorithm}

In practice, Greedy PIG can be implemented by repeated invocations of PIG (Eq.~\ref{eq:pig}\&\ref{eq:pig_argsort}).
The impact of~$g$ (Eq.~\ref{eq:g}) can be realized by iteratively updating:
\begin{equation}
    \hat{S} \leftarrow \textrm{top indices of Eq.~\ref{eq:pig_argsort} with \ \ }
    \gamma(.;  \xx^{(j+1)},  \xx) \textrm{ \ \ with \ \ } \ \   \xx^{(j+1)} \leftarrow \xx_{\hat{S}} + \xx^{(j)}_{\bar{\hat{S}}}
\end{equation}

Specifically, Greedy PIG is related to a continuous version of Greedy. In the continuous setting, selecting an element $i$ is equivalent to fixing the $i$-th element of the baseline to its final value. The $i$-th element now does not vary and its correlations with the remaining elements are eliminated.
At a high level, our approach has similarities to 
\citep{kapishnikov2021guided},
which adaptively computes the trajectory of the PIG path based on
gradient computations.

Similar to previous work,
the integral in Algorithm~\ref{alg:greedy_pig} on \Cref{alg_line:integral} can
be estimated by discretizing it into a number of steps.
A simpler approach that we found to be competitive and more frugal
is to approximate the integral by a single gradient
$\nabla_{\bar{S}} g(\onev_S)$, which we call the \emph{Sequential Gradient (SG)} algorithm. It should be
noted that, in contrast to integrated gradients,
Algorithm~\ref{alg:greedy_pig} empirically does not return negative attributions.

\section{Analysis}
\subsection{Attribution as subset selection}
\label{sec:attribution_subset_selection}

The integrated gradients method is widely used because of its simplicity and
usefulness. Since its inception,
it was known to satisfy two axioms that~\cite{sundararajan2017axiomatic} 
called sensitivity and implementation
invariance, as well as the convenient property that the sum of attributions is
equal to the function value. However, there is still no deep theoretical 
understanding of when this method (or any attribution method, for that matter)
succeeds or fails, or even how success is defined. 

Previous works have studied a variety of failure cases. In fact, it is well-observed
that the attributions on image datasets are often ``grainy,'' a property deemed
undesirable from a human standpoint (although as stated before, it is not
necessarily bad as long as the attributions are highly predictive of the model's
output). In addition, there will almost always exist some negative 
attribution scores, whose magnitude can be even higher than that of positive
attribution scores.
There have been efforts to mitigate these effects for computer
vision tasks, most notably by ascribing them to noise \citep{smilkov2017smoothgrad,adebayo2018sanity}, 
region grouping
\citep{ancona2017towards,kapishnikov2019xrai}, 
or ascribing them to the magnitude of
the gradient \citep{kapishnikov2021guided}.

In this work, we take a different approach and start from the basic question:
\begin{quote}
    \hspace{2cm} \emph{What are attribution scores supposed to compute?}
\end{quote}
An important step towards answering this question was done in
\citep{kapishnikov2019xrai}, which defined the \emph{softmax information
curve (SIC)} for image classification. In short, the idea is that if we
only keep $k$ features with the top attributions and drop the rest of the
features, the softmax output of the predicted class should be as large as
possible. This is because then, the model prediction can be distilled down
to a small number of features. 

Based on the above intuition, we define the problem in a more general form:
\begin{definition}[Attribution as subset selection]
\label{def:subset_selection}
Given a set function $G:2^{[n]}\rightarrow\mathbb{R}_{>0}$,
a value $k\in [n]$,
and a permutation $\rr=(r_1,r_2,\dots,r_n)$ of $[n]$,
the \emph{attribution quality} of $\rr$ \emph{at level $k$} is defined as
\[
    \frac{G(\{r_1,r_2,\dots,r_k\})} {\max_{S\subseteq [n]: |S|= k} G(S)}.
\]
We define the attribution quality of $\rr$ 
(without reference to $k$)
as the area under the curve defined by
the points $(k, G(\{r_1,r_2,\dots,r_k\})$ for all $k\in [n]$.
\end{definition}

The key advantage of this approach is that we are able to pose
attribution as an explicit optimization problem, and hence we are able to
compare, evaluate, and analyze the quality of attribution methods. In
addition, the formulation in Definition~\ref{def:subset_selection}
gives us flexibility in picking the objective $G$ to be maximized.
In fact, in \S\ref{sec:feature_attribution} we propose setting $G$ to be the log likelihood (\textit{i.e.}, negative cross entropy loss) instead of the top softmax score. 
In \S\ref{sec:feature_selection},
we will see how this formulation allows us to tie attribution together with feature selection.

\paragraph{Note.} As Definition~\ref{def:subset_selection} shows, we are
interested in the problem of attributing
predictions to a \emph{small} number of features. For example, when faced
with redundant features, we wish to pinpoint only a small number of them.
This does not capture applications in which the goal is to find the 
attribution for \emph{all} features, where the goal would be to assign equal
attribution score to all redundant features.

\subsection{Correlations and adaptivity}

Given the problem formulation in Definition~\ref{def:subset_selection}, we
are ready to study the strengths and weaknesses of attribution methods.
First, it is important to define the notion of \emph{marginal gain},
which is a central notion in subset selection problems.
\begin{definition}[Marginal gain]
Given a set function $G:2^{[n]}\rightarrow \mathbb{R}_{\ge0}$, the marginal
gain of the $i$-th element at $S\subseteq [n]$ is defined by
$G(S\cup\{i\}) - G(S)$. When $S$ is omitted, we simply define the 
marginal gain as $G(\{i\}) - G(\emptyset)$.
\end{definition}
Marginal gains are important because they are closely related to
the well-studied greedy algorithm for subset selection~\citep{nemhauser1978analysis},
and are crucial for its analysis. Therefore, a natural question arises:
\emph{Are 
the outputs of attribution methods correlated with the marginal 
gains?}

For this, let us assume that the set function
$G$ is induced by a continuous relaxation
$g:[0,1]^n\rightarrow\mathbb{R}$. We show that the integrated gradient attributions approximate the marginal gains up to an additive term
that depends on the second-order behavior of $g$, and corresponds to
the amount of correlation between variables. In fact, when the variables
are uncorrelated, integrated gradient attributions are equal to the marginal gains.
\begin{lemma}[PIG vs marginal gain]
\label{lem:pig_marginal}
Let $H(\ww)$ be the Hessian of 
a twice continuously differentiable function 
$g:[0,1]^n\rightarrow\mathbb{R}$ at $\ww$. Then, for all $i\in[n]$,
\[
    \left|\int_{t=0}^1 \nabla_i g(t\onev) dt - (g(\onev_i) - g(\zerov))\right| 
    \leq
    \frac{1}{2} \max_{\ww\in[0,1]^n, i\in[n]} \left|\sum_{j\neq i}H_{ij}(\ww)\right|\,,
\]
where $\onev_i$ is the $i$-th standard basis vector of $\mathbb{R}^n$.
\end{lemma}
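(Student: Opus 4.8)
The plan is to write the quantity inside the absolute value as an iterated integral of the off-diagonal Hessian entries in row $i$, and then bound it crudely. The two ingredients are the fundamental theorem of calculus applied twice, along two different segments.

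First I would rewrite the ``marginal gain'' term as a one-dimensional line integral along the $i$-th coordinate axis: applying the fundamental theorem of calculus to $t\mapsto g(t\onev_i)$ gives
\[
    g(\onev_i) - g(\zerov) = \int_0^1 \nabla_i g(t\onev_i)\, dt .
\]
Since the integrated-gradient term is $\int_0^1 \nabla_i g(t\onev)\, dt$, the expression inside the absolute value equals $\int_0^1 \bigl(\nabla_i g(t\onev) - \nabla_i g(t\onev_i)\bigr)\, dt$. So it suffices to control, for each fixed $t$, how much $\nabla_i g$ changes when we move from $t\onev_i$ to $t\onev$ — a move that keeps the $i$-th coordinate fixed at $t$ and slides every other coordinate from $0$ to $t$.

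Next I would apply the fundamental theorem of calculus a second time, now along the segment $\ww(s,t) = t\onev_i + s\,t\,(\onev - \onev_i)$ for $s\in[0,1]$, which lies in $[0,1]^n$ by convexity. Differentiating $s\mapsto \nabla_i g(\ww(s,t))$ and using that $\tfrac{d}{ds}\ww_j(s,t)$ equals $t$ for $j\neq i$ and $0$ for $j=i$,
\[
    \nabla_i g(t\onev) - \nabla_i g(t\onev_i) = \int_0^1 t \sum_{j\neq i} H_{ij}(\ww(s,t))\, ds .
\]
Substituting back yields the representation
\[
    \int_0^1 \nabla_i g(t\onev)\, dt - \bigl(g(\onev_i) - g(\zerov)\bigr) = \int_0^1\!\!\int_0^1 t \sum_{j\neq i} H_{ij}(\ww(s,t))\, ds\, dt .
\]

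Finally I would take absolute values, bound the integrand $\bigl|\sum_{j\neq i} H_{ij}(\ww(s,t))\bigr|$ by $\max_{\ww\in[0,1]^n,\, i\in[n]}\bigl|\sum_{j\neq i} H_{ij}(\ww)\bigr|$ (which also matches the max over $i$ in the statement), and evaluate $\int_0^1\!\int_0^1 t\, ds\, dt = \tfrac12$; this gives exactly the claimed inequality. There is no genuinely hard step; the only things to watch are (a) checking that all intermediate points $t\onev_i$ and $\ww(s,t)$ stay in $[0,1]^n$, so that the $C^2$ hypothesis and the Hessian bound apply, and (b) correctly tracking the Jacobian factor $t$ that appears when differentiating along the scaled segment — it is precisely this factor that produces the constant $\tfrac12$ rather than $1$. (One could instead package the whole thing as a single second-order Taylor expansion with integral remainder, but the two-step argument above keeps the bookkeeping minimal.)
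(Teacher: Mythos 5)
Your proof is correct and follows essentially the same route as the paper: both rewrite $g(\onev_i)-g(\zerov)$ as $\int_0^1 \nabla_i g(t\onev_i)\,dt$ and control the difference $\nabla_i g(t\onev)-\nabla_i g(t\onev_i)$ by the Hessian row $i$ along the segment from $t\onev_i$ to $t\onev$, picking up the factor $t$ whose integral gives the constant $\tfrac12$. The only cosmetic difference is that the paper invokes Taylor's theorem with an ``average Hessian'' $\oHH(t)$, which is exactly the explicit inner integral over $s$ that you write out.
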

Lemma~\ref{lem:pig_marginal} tells us that the quality of how well the 
integrated gradient attribution scores approximate the marginal gains
is strongly connected to correlations between input variables. Indeed,
if these correlations are too strong, the attributions can 
diverge in magnitude and sign, even in simple settings. This motivates combining 
integrated gradients with adaptivity (e.g. the Greedy algorithm), since
adaptivity naturally takes correlations into account. The Greedy PIG algorithm, which
we will define in \S\ref{sec:greedy_pig}, is a natural combination of integrated
gradients and Greedy.

We now look at a more concrete failure case of integrated gradients in
Lemma~\ref{lem:feature_redundancy}, that can
arise because of feature redundancy, and how adaptivity can help overcome this issue.
\begin{lemma}[Feature redundancy]
\label{lem:feature_redundancy}
Consider a continuous set 
function $g:[0,1]^n \rightarrow \mathbb{R}$ and~$t$ redundant features
numbered $1,2,\dots,t$, or in other words for any $\ww\in[0,1]^n$ we have
$g((w_1,\dots,w_n)) = h((\max\{w_1,\dots,w_t\},w_{t+1},\dots,w_n))$ for some 
$h:[0,1]^{n-t+1}\rightarrow\mathbb{R}$.
Then, the integrated gradients algorithm with a baseline of $\zerov$
will assign equal attribution score to features
$1$ through $t$.
\end{lemma}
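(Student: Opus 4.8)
The plan is to compute the integrated gradient attribution of feature $i \in \{1,\dots,t\}$ along the straight-line path $t \mapsto t\onev$ from the baseline $\zerov$, and show that it does not depend on which of the redundant indices we chose. First I would write the attribution of feature $i$ as $\int_{t=0}^1 \nabla_i g(t\onev)\,dt$ and plug in the redundancy hypothesis $g(w_1,\dots,w_n) = h(\max\{w_1,\dots,w_t\}, w_{t+1},\dots,w_n)$. Along the path all of $w_1,\dots,w_t$ equal the same value $t$, so $\max\{w_1,\dots,w_t\} = t$ at every point of the path, meaning the path lies on the diagonal where the $\max$ is attained by a tie among all $t$ coordinates.

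The key step is to evaluate $\nabla_i g$ at a point $t\onev$ for $i \le t$. By the chain rule, wherever $g$ is differentiable, $\nabla_i g(\ww) = \partial_1 h(\cdots)\cdot \partial_{w_i}\max\{w_1,\dots,w_t\}$. The subtlety is that $\max$ is not differentiable on the tie set, which is exactly where the path lives, so I would not argue pointwise on the diagonal. Instead I would use a symmetry argument: the function $g$ is invariant under permuting coordinates $1,\dots,t$, and the path $t\onev$ is fixed by every such permutation. Hence for any permutation $\pi$ of $\{1,\dots,t\}$ and any point on the path, $\nabla_i g(t\onev) = \nabla_{\pi(i)} g(t\onev)$ wherever these are defined — and since $g$ is assumed continuous (indeed the lemma is stated for a continuous $g$, and integrated gradients presumes enough regularity for the integral to make sense), the gradient exists almost everywhere along the path, which is all the integral needs. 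Therefore $\int_0^1 \nabla_i g(t\onev)\,dt = \int_0^1 \nabla_j g(t\onev)\,dt$ for all $i,j \in \{1,\dots,t\}$, i.e., features $1$ through $t$ receive equal attribution.

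The main obstacle is handling the nondifferentiability of $\max$ cleanly: one must either (a) invoke that integrated gradients is only well-defined when the relevant directional derivatives exist almost everywhere along the path and then push the permutation symmetry through the integral, or (b) approximate $\max$ by a smooth symmetric surrogate $\max_\epsilon$ (e.g.\ a softmax/$L^p$ relaxation), run the argument, and take $\epsilon \to 0$ using continuity/dominated convergence. Approach (a) via symmetry is the cleanest: the entire content of the proof is that $g$ restricted to a neighborhood of the diagonal is symmetric in the first $t$ arguments and the path is diagonal, so by uniqueness of the gradient the $i$-th and $j$-th partials agree along the path, and integrating preserves the equality. No use of Lemma~\ref{lem:pig_marginal} is needed — this is a direct symmetry computation — though one could remark that it illustrates the same correlation pathology, since the $t$ redundant features are maximally correlated and adaptivity (Greedy PIG) would break the tie by fixing one of them and zeroing out the marginal gain of the rest.
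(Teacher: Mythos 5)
The paper itself offers no proof of this lemma (the ``Missing Proofs'' appendix only proves Lemma~\ref{lem:pig_marginal} and a linear-regression lemma), so your argument has to stand on its own. Your central idea --- that $g$ is invariant under permutations of coordinates $1,\dots,t$, the straight-line path $s\mapsto s\onev$ is fixed pointwise by every such permutation, hence $\nabla_i g(s\onev)=\nabla_{\pi(i)}g(s\onev)$ wherever the partials exist, and equality passes through the integral --- is the right and presumably intended argument.

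The genuine gap is in your preferred route (a). Continuity does not give almost-everywhere differentiability along a fixed line, and for the $\max$-composition the failure is not on a null set: the non-differentiability is concentrated \emph{exactly} on the diagonal, which is the entire integration path. Concretely, for $g(w_1,w_2)=\max\{w_1,w_2\}$ the partial $\partial_1 g$ does not exist at any point $s\onev$ (the one-sided derivatives are $0$ and $1$), and the same happens whenever $\partial_1 h\neq 0$ along the path. So the integrand $\nabla_i g(s\onev)$ is ill-defined pointwise everywhere on the path, and the phrase ``exists almost everywhere along the path, which is all the integral needs'' does not rescue it. To make the statement rigorous you must first fix a meaning for the attribution at tie points, and your option (b) is the way to do it: either replace $\max$ by a smooth symmetric surrogate (softmax or $L^p$) and pass to the limit by dominated convergence, or adopt an explicitly symmetric convention for the gradient at ties (averaged one-sided derivatives, a symmetric subgradient selection, or the tie-splitting rule used by automatic differentiation, which is what the ``integrated gradients algorithm'' actually computes). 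Under any such symmetric convention the permutation argument goes through verbatim and yields equal attributions for features $1$ through $t$; so (b) should be the main proof, not a fallback remark. Your closing observation that adaptivity zeroes out the redundant copies after one is selected is consistent with the discussion following the lemma in the paper.
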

It follows from Lemma~\ref{lem:feature_redundancy} that it is possible to replicate
the top-attributed feature multiple times such that the top 
attributions are all placed on redundant copies of the same feature, therefore missing
features $t+1$ through $n$.
This behavior arises because integrated gradients is a one-shot 
algorithm, but it can be remedied by introducing adaptivity. Specifically, if we
select \emph{one} of the features with top attributions, e.g., feature $1$ and then
re-run integrated gradients with an updated baseline $(1, 0, \dots, 0)^\top$,
then the new attribution scores of the redundant features $2,\dots,t$ will be $0$. 
Therefore, the remaining $k-1$ top attributions will be placed on the remaining features
$t+1,\dots,n$.

\section{Experimental Evaluation}
In this section, we evaluate the effect of the adaptivity introduced by Greedy PIG.
A comparison with all attribution methods is out of scope,
mainly because our proposed modification can be easily adapted to
other algorithms (e.g., SmoothGrad), and so, even though we provide comparisons with different popular methods,
we mostly concentrate on integrated gradients.
We defer an evaluation of adaptive generalizations of other one-shot algorithms to future work.
\subsection{Feature attribution for image recognition}
\label{sec:feature_attribution}

\paragraph{Experimental setup.} We use the MobileNetV2 neural network~\citep{sandler2018mobilenetv2} pretrained on ImageNet, and
compare different feature attribution methods on individual Imagenet examples. We use the all-zero baseline for 
integrated gradients, which, because of the data 
preprocessing pipeline of MobileNetV2, corresponds to an
all-gray image. To ensure that our results hold across models, we present
additional results on ResNet50 in the appendix.

\paragraph{Top-class attribution.} \cite{kapishnikov2019xrai} introduced the softmax 
information curve for image classification, which plots the 
output softmax probability of the target class using only the top
attributions, as a function of the compressed image
size. Since we are interested in a
general method, we instead plot the output softmax as a function of the
number of top selected features.
To perform attribution based on our framework, 
we let $c^*$ be the class output by the model 
$f(\xx; \vtheta)$ on input $\xx$ and with parameters
$\vtheta$, and then define
$g^{\mathrm{top1}}(\mathbf{s}) := f_{c^*}(\xx \odot\mathbf{s}; \vtheta))$, 
where
$\mathbf{s} \in[0,1]^n$ and $\odot$ denotes the Hadamard product. In other words, 
$g^{\mathrm{top1}}$ gives the softmax output of the 
(fixed) class predicted by the model, after re-weighting
input features by $\mathbf{s}$. We can now run 
gradient-based attribution algorithms on $g$. The 
results can be found in Figure~\ref{fig:sic}
and
Figure~\ref{fig:tabby}.

\paragraph{Loss attribution.} Even though the top class attribution methodology can explain what are the most important
features that influence the model's top predicted class, it might fail to capture other
aspects of the model's behavior. For example, if the model has low certainty in its
prediction, the softmax output of the top class will be low. In fact, in such cases the 
attribution method can select a number of features that give a much more confident output.
However, it can instead be useful to capture the outputs of \emph{all classes}, and not just
the top one. More generally, we can use arbitrary loss functions to attribute certain 
aspects of the model's behavior to the features.

\begin{figure}[tb]
    \centering
    \begin{subfigure}[b]{\textwidth}
        \centering
        \includegraphics[width=0.35\linewidth]{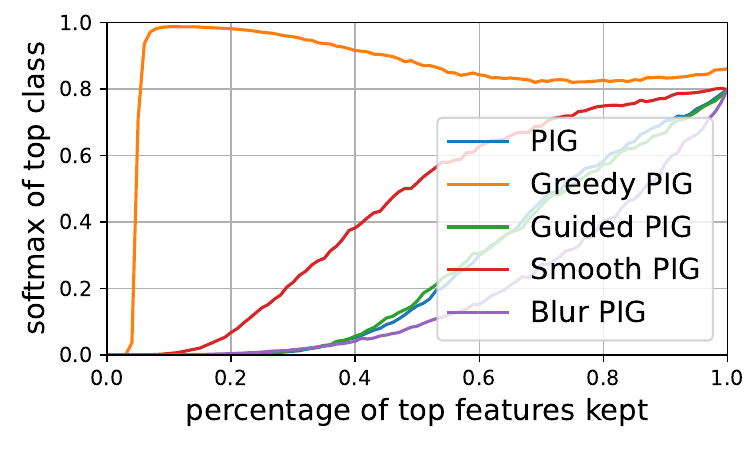}%
        \hspace{0.35cm}
        \includegraphics[width=0.35\linewidth]{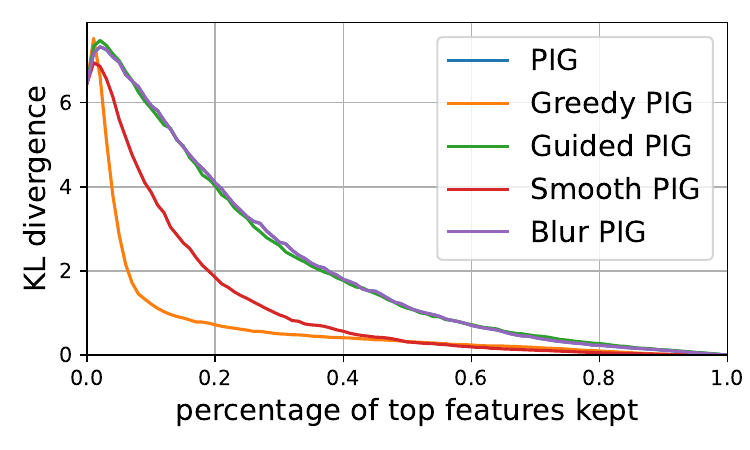}
    \end{subfigure}
    \begin{subfigure}[b]{\textwidth}
          \centering
          \begin{tabular}{lcc}
            \toprule
            \cmidrule(r){1-2}
            Algorithm     & softmax AUC $(\uparrow)$ & KL divergence AUC $(\downarrow)$ \\
            \midrule
            Integrated Gradients~\citep{sundararajan2017axiomatic} & $0.2639$ & $2.0812$ \\
            Smooth PIG~\citep{smilkov2017smoothgrad} & $0.4433$ & $1.1366$\\
            Blur PIG~\citep{xu2020attribution} & $0.1903$ & $2.0812$\\
            Guided PIG~\citep{kapishnikov2021guided} & $0.2623$ & $2.0644$\\
            Greedy PIG & ${\bf 0.8486}$ & ${\bf 0.6655}$ \\
            \bottomrule
          \end{tabular}
    \end{subfigure}
    \caption{Evaluating the attribution performance across $1466$ 
    examples randomly drawn from Imagenet. The Softmax
    Information Curve (SIC) plots the softmax output of the top class as
    a function of the number of features kept (there are
    $224\times 224\times 3$ features total for these RGB images).
    The Loss Information Curve
    (LIC) similarly plots the KL divergence from the output 
    probabilities on the features kept, to the output probabilities when
    all features are kept. To compute these results, we run 
    integrated gradients and guided integrated gradients
    with $2000$ steps and Greedy PIG with 
    $100$ rounds, $20$ steps each. Then, for each algorithm
    and number of features kept, we compute and plot the median
    across all examples, as in~\cite{kapishnikov2019xrai}. Left: The Softmax Information Curve (higher is better).
        Right: The Loss Information Curve (lower is better).
    }
    \label{fig:sic}
    \vspace{-0.2cm}
\end{figure}

\begin{figure}[tb]
    \centering
    \begin{subfigure}[b]{\textwidth}
        \centering
        \includegraphics[width=0.2\linewidth]{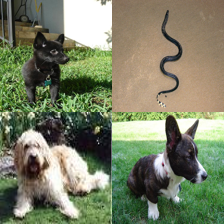}
        \includegraphics[width=0.2\linewidth]{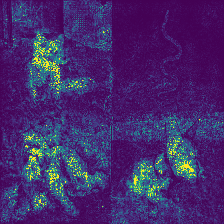}
        \includegraphics[width=0.2\linewidth]{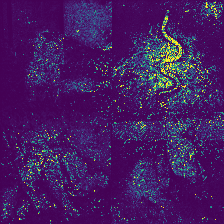}
        \caption{Top 15000 attributions for class ``sea snake''.
        Left: input, Middle: integrated gradients, Right: Greedy PIG.}
    \end{subfigure}
    \begin{subfigure}[b]{\textwidth}
          \centering
          \begin{tabular}{lc}
            \toprule
            \cmidrule(r){1-2}
            Algorithm     & pointing accuracy $(\uparrow)$ \\
            \midrule
            Integrated Gradients~\citep{sundararajan2017axiomatic} & $0.66$ \\
            Greedy PIG & ${\bf 0.83}$\\
            \bottomrule
          \end{tabular}
         \caption{Pointing accuracy aggregated over $25$ $2x2$ grids
         generated from examples that had the highest prediction confidence from $1500$ examples randomly drawn from the validation set of ImageNet.}
    \end{subfigure}
    \caption{Results from the pointing game~\cite{bohle2021convolutional}
    that is used for sanity checking image attribution methods. We 
    generate $2x2$ grids of the highest prediction confidence images, and
    obtain the attribution results for each class. For each (example, 
    class) pair, we count it as a positive if the majority of the top
    $15000$ attributions are on the quadrant associated with that class.
    We measure pointing accuracy as the fraction of such positives.
    }
    \label{fig:pointing1}
    \vspace{-0.2cm}
\end{figure}

Specifically, instead of asking which features are most responsible for a specific 
classification output, we ask: Which features are most responsible for the
\emph{distribution} on output classes? In other words,
how close is the model's multiclass output vector when fed a subset of features versus when fed all the features?
In order to answer this question, we use the loss function on which the model is trained.
For multiclass classification, this is usually set to using the negative cross entropy loss
$\ell(\oyy, \yy) = \langle \oyy, \log \yy\rangle$ to define a set function
$G(S) = \ell(f(\xx), f(\xx_S))$
and a corresponding continuous extension
$g^{\text{logloss}}(\mathbf{s})
= \ell(f(\xx), f(\xx\odot \mathbf{s}))$.
Maximizing $G$
is equivalent to maximizing $\ell(f(\xx), f(\xx_S))$. Fortunately, gradient-based attribution algorithms can be easily extended to arbitrary objective functions.
The results can be found in Figure~\ref{fig:sic} and 
Figure~\ref{fig:fish}.
\subsection{Edge-attribution to compress graphs and interpret GNNs}
\label{sec:graph_compression}

\begin{figure}[tb]
    \centering
    \begin{subfigure}[b]{0.27\textwidth}
        \includegraphics[height=2.2cm]{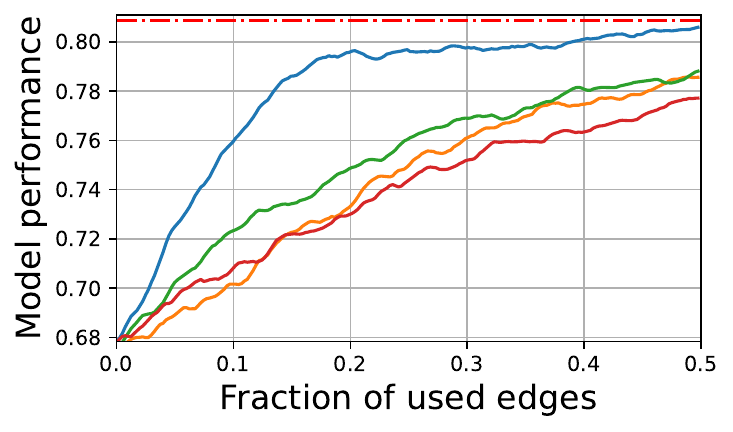}
        \caption{Cora}
    \end{subfigure}
    \begin{subfigure}[b]{0.27\textwidth}
        \includegraphics[height=2.2cm]{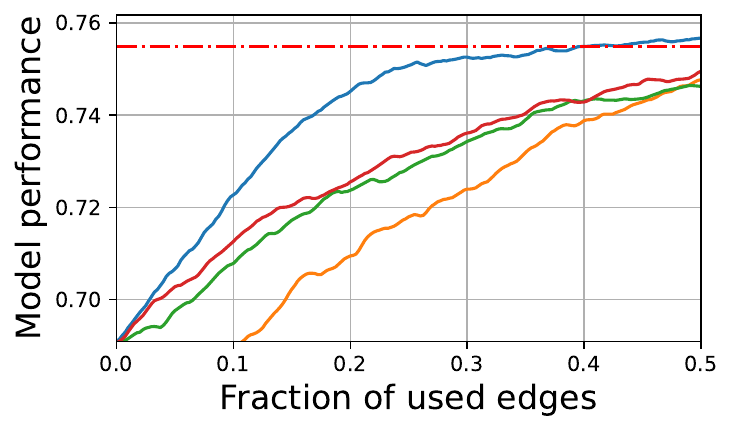}
        \caption{Pubmed}
    \end{subfigure}
    \begin{subfigure}[b]{0.27\textwidth}
        \includegraphics[height=2.2cm]{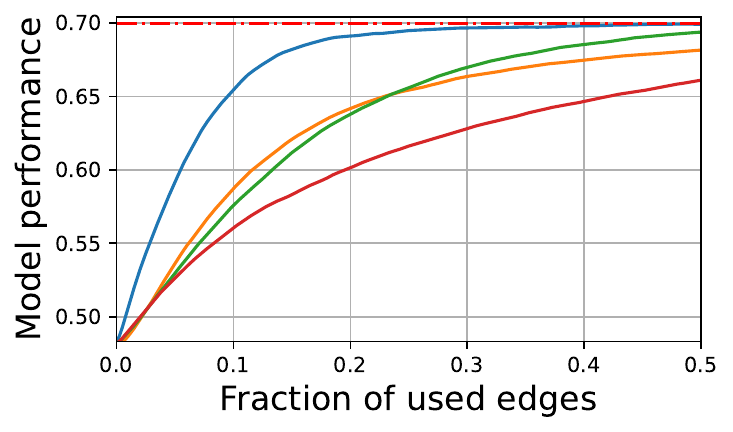}
        \caption{ogbn-arxiv}
    \end{subfigure}
    \begin{subfigure}[b]{0.15\textwidth}
        \raisebox{1.1cm}{\includegraphics[height=1.5cm]{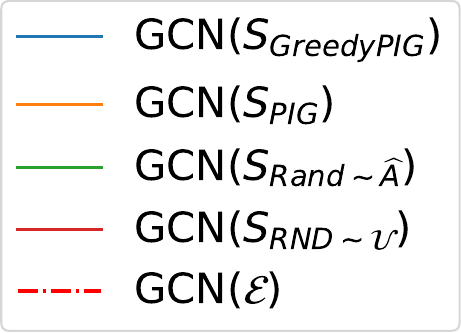}}
    \end{subfigure}
    \caption{Pre-trained accuracy GCN with inference using edge-subset selected by various methods.
    Solid lines plot accuracy vs.\ ratio of selected edges. Dashed line shows accuracy with all edges.}
    \label{fig:graph_compression}
\end{figure}

We use our method for \emph{graph compression} with pre-trained GCN of \citet{kipf}. We use a three-layer GCN, as: \ \ 
$
    \textrm{GCN}(\mathcal{E}; \theta) = \textbf{softmax}\left(\widehat{A_\mathcal{E}} \sigma\left(\widehat{A_\mathcal{E}} \sigma\left(\widehat{A_\mathcal{E}} \mathbf{X} \theta^{(1)}\right)\theta^{(2)}\right) \theta^{(3)}\right),
$
where
$\mathbf{X} \in \mathbb{R}^{m \times n}$ contains features of all $m$ nodes, 
$\mathcal{E} \subseteq [m] \times [m]$ denotes (undirected) edges with a corresponding sparse (symmetric) adjacency matrix with $(A_\mathcal{E})_{ij}=1$ iff  in $(i, j) \in \mathcal{E}$;
$\widehat{\cdot}$ is the symmetric-normalization with self-connections added, i.e.,
$\widehat{A} = (D + \mathbf{I}_m)^{-\frac12}(A+\mathbf{I}_m)(D + \mathbf{I}_m)^{-\frac12}$; diagonal degree matrix $D = \textrm{diag}(\mathbf{1}_m^\top A)$; and $\sigma$ denotes element-wise non-linearity such as ReLU.

Given GCN pre-trained for node classification \citep{kipf, mixhop, graphsage}, we are interested in inferring node labels using only a subset of the edges.
The edge subset can be chosen uniformly at random (Fig.~\ref{fig:graph_compression}: $S_{\textrm{RND} \sim \mathcal{U}})$;
or select edge $(i, j)$ with probability proportional to $ (D_{ii}D_{jj})^{-\frac12}$;
or according to edge-attribution assigned by PIG \citep{sundararajan2017axiomatic} ($S_{\textrm{PIG}})$;
or using our method Greedy PIG (Alg.~\ref{alg:greedy_pig}) ($S_{\textrm{GreedyPIG}}$), with
\begin{equation}
    g^\textrm{GNN}(\mathbf{s}; A^{(0)}, A) = A^{(0)} + \mathbf{s} \odot (A - A^{(0)} ) \ , \  \textrm{ \ with all-zero \ } A^{(0)} = \mathbf{0} \ , \ \textrm{ \ and sparse \ } \mathbf{s} \in \mathbb{R}^{m \times m}.
\end{equation}

Figure~\ref{fig:graph_compression} shows that Greedy PIG can better compress the graph while maintaining performance of the underlying model. The GCN model was pre-trained on the full graph with TF-GNN \citep{tfgnn}, on datasets of \textbf{ogbn-arxiv} from \citet{ogb}, and \textbf{Cora} and \textbf{Pubmed} from \citep{planetoid}.
Inference on the full graph (using all edges) gives performance indicated with a dashed-red line. 
We observe that removing $>50\%$ of the graph edges, using GreedyPIG, has negligible effect on GCN performance.
For the GCN model assumptions, we ensure the selected edges $S\subseteq \mathcal{E}$ correspond to symmetric $A_{S} = A_{S}^\top$ by projecting the output of the gradient oracle 
on to the space of symmetric matrices---by averaging the (sparse) gradient with its transpose, \textit{i.e.}, by re-defining:
\begin{equation}
    \widetilde{\nabla_\mathbf{s} g(\mathbf{s})} \overset{\triangle}{=} \frac12 \frac{\partial}{\partial \mathbf{s}} g^\textrm{GNN}(\mathbf{s}) + \frac12 \left(\frac{\partial}{\partial \mathbf{s}} g^\textrm{GNN}(\mathbf{s}) \right)^\top \ \ \ \textrm{for use in Alg.~\ref{alg:greedy_pig}}
\end{equation}

We also use Greedy PIG to interpret the GCN. Specifically,
zooming-into any particular graph node, we would like to to explain the subgraph around the node that leads to the GCN output.
Due to space constraints, we report in the appendix subgraphs around nodes in the ogbn-arxiv dataset. Note that \citet{sanchez2020attribute_gnn}
apply integrated gradients
on smaller graphs (e.g., chemical molecules with at most 100 edges),
whereas we consider large graphs (e.g., millions of edges).
\subsection{Post-hoc feature selection on Criteo tabular CTR dataset}
\label{sec:feature_selection}

Feature selection is the task of selecting a subset of features on which to train a model, and dropping the remaining features. There are many approaches
for selecting the most predictive set of features~\citep{li2017feature}. Even though
in general feature selection and model training can be intertwined, often
it is easier to decouple these processes. Specifically, given a trained
model, it is often desirable to perform \emph{post-hoc} feature selection,
where one only has the ability to inspect and not modify the model. This is also related to global feature attribution~\citep{zhang2021survey}.

\paragraph{Methodology.} In Table~\ref{table:criteo_results}, we compare the quality of different
attribution methods for post-hoc feature selection. To evaluate the quality
of feature selection, we employ the following approach: {\bf 1.} Train a neural network model using all features. {\bf 2.} Run each attribution method to compute \emph{global} attribution scores which are aggregate attribution scores over many input examples. {\bf 3.} Pick the top-$k$ attributed features, and train new pruned models that only use these top features. Report the validation loss of the resulting pruned models for different values of $k$. 
The results of the methodology described above applied on a 
random subset of the Criteo
dataset are presented in Table~\ref{table:criteo_results}.
We follow the setup of~\cite{yasuda2023sequential}, who define a dense neural network with
three hidden layers with sizes $768$, $256$ and $128$.
To make the comparison fair, we guarantee that all algorithms consume the same amount
of data (gradient batches). For example, Greedy PIG with $T=1$ steps (which we also
call Sequential Gradient, see~Section~\ref{sec:greedy_pig}),
uses 5x more data
batches for each gradient computation than Greedy PIG with $T=5$.

\begin{table}[t]
  \caption{Cross-entropy loss on Criteo CTR dataset using only the top-$k$ features. 
  Reported is the minimum validation loss after 20K training steps with batch size 512.
  During selection, all of the algorithms consume 
  the same number of batches.}
  \centering
  \begin{tabular}{lcccc}
    \toprule
    \cmidrule(r){1-2}
    Algorithm     & $k=5$     & $k=10$ & $k=20$ & $k=30$ \\
    \midrule
    Integrated Gradients ($T=39$) & $0.4827$ & $0.4641$ & $0.4605$ & $0.4533$ \\
    \midrule
    Greedy PIG ($T=1$) & $0.4728$ & $0.4629$ & $0.4551$ &  ${\bf 0.4508}$\\
    Greedy PIG ($T=5$) & ${\bf 0.4723}$ & ${\bf 0.4627}$ & ${\bf 0.4531}$ & $0.4509$\\ 
    \bottomrule
  \end{tabular}
  \label{table:criteo_results}
\end{table}

\section{Related Work}

Feature attribution, also known as computing the \emph{saliency} of each feature,
consists of a large class of methods for explaining the predictions of neural networks.
The attribution score describes the impact of each feature on the neural network's output.
In the following, we start with most-related to our research, and then move onto broader and further topics.

\paragraph{Integrated gradients.}
Our work is most related to (and generalizes to set functions) the work of ~\citet{sundararajan2017axiomatic,kapishnikov2021guided,lundstrom2022rigorous,qi2019visualizing,goh2021understanding,sattarzadeh2021integrated}.
In their seminal work,
\citet{sundararajan2017axiomatic}
proposed
path integrated gradients (PIG), 
a method that assigns an importance score to each feature by integrating the partial derivative with respect to a feature along a path that interpolates between the background ``\textit{baseline}'' and the input features.
PIG, however, has some limitations. For example, it can be sensitive to the choice of baseline input. Further, it can be difficult to interpret the results of path integrated gradients when there are duplicate features or features that carry common information. 
It has also been observed that PIG is
sensitive to input noise~\citep{smilkov2017smoothgrad},
model re-trains~\citep{hooker2019benchmark}, and
in some cases
provably unable to identify spurious 
features~\citep{bilodeau2022impossibility}.

\cite{simonyan2014deep} used gradient ascent to explore the internal workings of neural networks
and to quantify the effect of each individual pixel on the classification prediction.
\cite{springenberg2015striving} presented a neural network that is composed entirely of convolutional layers and introduced guided backpropagation, a technique for eliminating negative signals generated during backpropagation when performing standard gradient ascent.
Grad-CAM \citep{selvaraju2017grad} produces a saliency map by using the gradients of any target concept flowing into the final convolutional layer. SmoothGrad~\citep{smilkov2017smoothgrad} starts by computing the gradient of the class score function with respect to the input image.
However, it visually sharpens the gradient-based sensitivity maps by adding noise to the input image and computing the gradient for each of the perturbed images. Averaging the sensitivity maps together produces a clearer result. For a comprehensive survey of feature attribution and neural network interpretability, see \citet{zhang2021survey}.

\paragraph{Shapley value.} Shapley value was originally introduced in game theory~\citep{shapley1953value}. While it can be directly applied to explain the predictions of neural networks, it requires extremely high computational complexity. \cite{lundberg2017unified} proposed the SHapley Additive exPlanations (SHAP) algorithm, which approximates the Shapley value of features.  \citet{sundararajan2020many} employed an axiomatic approach to investigate the differences between various versions of the Shapley value for attribution, and they also discussed a technique called Baseline Shapley (BShap). The following are other Shapley value-based methods~\citep{chenshapley,fryeshapley}.

\paragraph{Gradients and backpropgation.} Another class of interpretability methods is based on gradients and backpropagation.
These methods compute the gradient of each feature, which is used as a measure of the feature's importance~\citep{baehrens2010explain,simonyan2014deep}. Integrated gradient is an instance of the gradient-based method. Backpropagation-based methods design backpropagation rules for convolutional, pooling, and nonlinear activation layers so  they can assign importance scores in a fair and reasonable manner during backpropagation~\citep{shrikumar2017learning}. The guided backpropagation (GBP) algorithm~\citep{springenberg2015striving}, the layer-wise relevance propagation (LRP) algorithm~\citep{bach2015pixel}, and integrated gradient (IG) algorithm~\citep{sundararajan2017axiomatic} are three notable examples of this class of methods.

\paragraph{Model-agnostic explanation.}
\citet{ribeiro2016model} proposed Local Interpretable Model-Agnostic Explanations (LIME), a method that uses a trained local proxy model to provide explanation results without the need for backpropagation to compute gradients. \citet{plumb2018model} used a similar method that relies on a local linear model for explanation.

\section{Conclusion}
We view feature attribution and feature selection as instances of subset selection, which
allows us to apply well-established theories and approximations from the field of discrete and submodular optimization.
Through these,
we give a greedy approximation using path integrated gradients (PIG), which we coin Greedy PIG.
We show that Greedy PIG can succeed in scenarios where other integrated gradient methods fail, e.g., when features are correlated.
We qualitatively show the efficacy of Greedy PIG---it
explains predictions of ImageNet-trained model as attributions on input images,
and it explains Graph Neural Network (GNN) as attributions on edges on ogbn-arxiv graph.
We quantitatively evaluate Greedy PIG for feature selection on the Criteo CTR problem,
and by compressing a graph by preserving only a subset of its edges while maintaining GNN performance.
Our evaluations show that Greedy PIG gives qualitatively better model interpretations than alternatives,
and scores higher on quantitative evaluation metrics.

\bibliography{references}

\begin{thebibliography}{62}
\providecommand{\natexlab}[1]{#1}
\providecommand{\url}[1]{\texttt{#1}}
\expandafter\ifx\csname urlstyle\endcsname\relax
  \providecommand{\doi}[1]{doi: #1}\else
  \providecommand{\doi}{doi: \begingroup \urlstyle{rm}\Url}\fi

\bibitem[Abebe et~al.(2022)Abebe, Hardt, Jin, Miller, Schmidt, and
  Wexler]{abebe2022adversarial}
Rediet Abebe, Moritz Hardt, Angela Jin, John Miller, Ludwig Schmidt, and
  Rebecca Wexler.
\newblock Adversarial scrutiny of evidentiary statistical software.
\newblock In \emph{2022 ACM Conference on Fairness, Accountability, and
  Transparency}, FAccT '22, page 1733–1746, 2022.

\bibitem[Abu{-}El{-}Haija et~al.(2019)Abu{-}El{-}Haija, Perozzi, Kapoor,
  Harutyunyan, Alipourfard, Lerman, Steeg, and Galstyan]{mixhop}
Sami Abu{-}El{-}Haija, Bryan Perozzi, Amol Kapoor, Hrayr Harutyunyan, Nazanin
  Alipourfard, Kristina Lerman, Greg~Ver Steeg, and Aram Galstyan.
\newblock Mixhop: Higher-order graph convolutional architectures via sparsified
  neighborhood mixing.
\newblock In \emph{International Conference on Machine Learning}, ICML, 2019.

\bibitem[Adebayo et~al.(2018)Adebayo, Gilmer, Muelly, Goodfellow, Hardt, and
  Kim]{adebayo2018sanity}
Julius Adebayo, Justin Gilmer, Michael Muelly, Ian Goodfellow, Moritz Hardt,
  and Been Kim.
\newblock Sanity checks for saliency maps.
\newblock \emph{Advances in neural information processing systems}, 31, 2018.

\bibitem[Ahmad et~al.(2018)Ahmad, Eckert, and
  Teredesai]{ahmad2018interpretable}
Muhammad~Aurangzeb Ahmad, Carly Eckert, and Ankur Teredesai.
\newblock Interpretable machine learning in healthcare.
\newblock In \emph{Proceedings of the 2018 ACM International Conference on
  Bioinformatics, Computational Biology, and Health Informatics}, pages
  559--560, 2018.

\bibitem[Ancona et~al.(2017)Ancona, Ceolini, {\"O}ztireli, and
  Gross]{ancona2017towards}
Marco Ancona, Enea Ceolini, Cengiz {\"O}ztireli, and Markus Gross.
\newblock Towards better understanding of gradient-based attribution methods
  for deep neural networks.
\newblock \emph{arXiv preprint arXiv:1711.06104}, 2017.

\bibitem[Axiotis and Yasuda(2023)]{axiotis2023performance}
Kyriakos Axiotis and Taisuke Yasuda.
\newblock Performance of $\ell_1$ regularization for sparse convex
  optimization.
\newblock \emph{arXiv preprint arXiv:2307.07405}, 2023.

\bibitem[Bach et~al.(2015)Bach, Binder, Montavon, Klauschen, M{\"u}ller, and
  Samek]{bach2015pixel}
Sebastian Bach, Alexander Binder, Gr{\'e}goire Montavon, Frederick Klauschen,
  Klaus-Robert M{\"u}ller, and Wojciech Samek.
\newblock On pixel-wise explanations for non-linear classifier decisions by
  layer-wise relevance propagation.
\newblock \emph{PloS one}, 10\penalty0 (7):\penalty0 e0130140, 2015.

\bibitem[Baehrens et~al.(2010)Baehrens, Schroeter, Harmeling, Kawanabe, Hansen,
  and M{\"u}ller]{baehrens2010explain}
David Baehrens, Timon Schroeter, Stefan Harmeling, Motoaki Kawanabe, Katja
  Hansen, and Klaus-Robert M{\"u}ller.
\newblock How to explain individual classification decisions.
\newblock \emph{The Journal of Machine Learning Research}, 11:\penalty0
  1803--1831, 2010.

\bibitem[Balkanski and Singer(2018)]{balkanski2018adaptive}
Eric Balkanski and Yaron Singer.
\newblock The adaptive complexity of maximizing a submodular function.
\newblock In \emph{Proceedings of the 50th Annual ACM SIGACT Symposium on
  Theory of Computing}, pages 1138--1151, 2018.

\bibitem[Bateni et~al.(2019)Bateni, Chen, Esfandiari, Fu, Mirrokni, and
  Rostamizadeh]{bateni2019categorical}
MohammadHossein Bateni, Lin Chen, Hossein Esfandiari, Thomas Fu, Vahab
  Mirrokni, and Afshin Rostamizadeh.
\newblock Categorical feature compression via submodular optimization.
\newblock In \emph{International Conference on Machine Learning}, pages
  515--523. PMLR, 2019.

\bibitem[Bilodeau et~al.(2022)Bilodeau, Jaques, Koh, and
  Kim]{bilodeau2022impossibility}
Blair Bilodeau, Natasha Jaques, Pang~Wei Koh, and Been Kim.
\newblock Impossibility theorems for feature attribution.
\newblock \emph{arXiv preprint arXiv:2212.11870}, 2022.

\bibitem[Bohle et~al.(2021)Bohle, Fritz, and Schiele]{bohle2021convolutional}
Moritz Bohle, Mario Fritz, and Bernt Schiele.
\newblock Convolutional dynamic alignment networks for interpretable
  classifications.
\newblock In \emph{Proceedings of the IEEE/CVF Conference on Computer Vision
  and Pattern Recognition}, pages 10029--10038, 2021.

\bibitem[B{\"o}hle et~al.(2022)B{\"o}hle, Fritz, and Schiele]{bohle2022b}
Moritz B{\"o}hle, Mario Fritz, and Bernt Schiele.
\newblock B-cos networks: Alignment is all we need for interpretability.
\newblock In \emph{Proceedings of the IEEE/CVF Conference on Computer Vision
  and Pattern Recognition}, pages 10329--10338, 2022.

\bibitem[Bubeck et~al.(2023)Bubeck, Chandrasekaran, Eldan, Gehrke, Horvitz,
  Kamar, Lee, Lee, Li, Lundberg, et~al.]{bubeck2023sparks}
S{\'e}bastien Bubeck, Varun Chandrasekaran, Ronen Eldan, Johannes Gehrke, Eric
  Horvitz, Ece Kamar, Peter Lee, Yin~Tat Lee, Yuanzhi Li, Scott Lundberg,
  et~al.
\newblock Sparks of artificial general intelligence: Early experiments with
  {GPT}-4.
\newblock \emph{arXiv preprint arXiv:2303.12712}, 2023.

\bibitem[Cai et~al.(2018)Cai, Luo, Wang, and Yang]{cai2018feature}
Jie Cai, Jiawei Luo, Shulin Wang, and Sheng Yang.
\newblock Feature selection in machine learning: A new perspective.
\newblock \emph{Neurocomputing}, 300:\penalty0 70--79, 2018.

\bibitem[Chen et~al.(2021{\natexlab{a}})Chen, Pierson, Rose, Joshi, Ferryman,
  and Ghassemi]{chen2021ethical}
Irene~Y Chen, Emma Pierson, Sherri Rose, Shalmali Joshi, Kadija Ferryman, and
  Marzyeh Ghassemi.
\newblock Ethical machine learning in healthcare.
\newblock \emph{Annual Review of Biomedical Data Science}, 4:\penalty0
  123--144, 2021{\natexlab{a}}.

\bibitem[Chen et~al.(2019)Chen, Song, Wainwright, and Jordan]{chenshapley}
Jianbo Chen, Le~Song, Martin~J Wainwright, and Michael~I Jordan.
\newblock L-shapley and c-shapley: Efficient model interpretation for
  structured data.
\newblock In \emph{International Conference on Learning Representations}, 2019.

\bibitem[Chen et~al.(2021{\natexlab{b}})Chen, Esfandiari, Fu, Mirrokni, and
  Yu]{chen2021feature}
Lin Chen, Hossein Esfandiari, Gang Fu, Vahab~S Mirrokni, and Qian Yu.
\newblock Feature cross search via submodular optimization.
\newblock In \emph{29th Annual European Symposium on Algorithms (ESA 2021)}.
  Schloss Dagstuhl-Leibniz-Zentrum f{\"u}r Informatik, 2021{\natexlab{b}}.

\bibitem[Elenberg et~al.(2018)Elenberg, Khanna, Dimakis, and
  Negahban]{elenberg2018restricted}
Ethan~R Elenberg, Rajiv Khanna, Alexandros~G Dimakis, and Sahand Negahban.
\newblock Restricted strong convexity implies weak submodularity.
\newblock \emph{The Annals of Statistics}, 46\penalty0 (6B):\penalty0
  3539--3568, 2018.

\bibitem[Fahrbach et~al.(2019{\natexlab{a}})Fahrbach, Mirrokni, and
  Zadimoghaddam]{fahrbach2019non}
Matthew Fahrbach, Vahab Mirrokni, and Morteza Zadimoghaddam.
\newblock Non-monotone submodular maximization with nearly optimal adaptivity
  and query complexity.
\newblock In \emph{International Conference on Machine Learning}, pages
  1833--1842. PMLR, 2019{\natexlab{a}}.

\bibitem[Fahrbach et~al.(2019{\natexlab{b}})Fahrbach, Mirrokni, and
  Zadimoghaddam]{fahrbach2019submodular}
Matthew Fahrbach, Vahab Mirrokni, and Morteza Zadimoghaddam.
\newblock Submodular maximization with nearly optimal approximation, adaptivity
  and query complexity.
\newblock In \emph{Proceedings of the Thirtieth Annual ACM-SIAM Symposium on
  Discrete Algorithms}, pages 255--273. SIAM, 2019{\natexlab{b}}.

\bibitem[Ferludin et~al.(2022)Ferludin, Eigenwillig, Blais, Zelle, Pfeifer,
  Sanchez-Gonzalez, Li, Abu-El-Haija, Battaglia, Bulut, et~al.]{tfgnn}
Oleksandr Ferludin, Arno Eigenwillig, Martin Blais, Dustin Zelle, Jan Pfeifer,
  Alvaro Sanchez-Gonzalez, Sibon Li, Sami Abu-El-Haija, Peter Battaglia,
  Neslihan Bulut, et~al.
\newblock {TF-GNN}: Graph neural networks in tensorflow.
\newblock \emph{arXiv preprint arXiv:2207.03522}, 2022.

\bibitem[Frye et~al.(2021)Frye, de~Mijolla, Begley, Cowton, Stanley, and
  Feige]{fryeshapley}
Christopher Frye, Damien de~Mijolla, Tom Begley, Laurence Cowton, Megan
  Stanley, and Ilya Feige.
\newblock Shapley explainability on the data manifold.
\newblock In \emph{International Conference on Learning Representations}, 2021.

\bibitem[Goh et~al.(2021)Goh, Lapuschkin, Weber, Samek, and
  Binder]{goh2021understanding}
Gary~SW Goh, Sebastian Lapuschkin, Leander Weber, Wojciech Samek, and Alexander
  Binder.
\newblock Understanding integrated gradients with smoothtaylor for deep neural
  network attribution.
\newblock In \emph{2020 25th International Conference on Pattern Recognition
  (ICPR)}, pages 4949--4956. IEEE, 2021.

\bibitem[Graves et~al.(2013)Graves, Mohamed, and Hinton]{graves2013speech}
Alex Graves, Abdel-rahman Mohamed, and Geoffrey Hinton.
\newblock Speech recognition with deep recurrent neural networks.
\newblock In \emph{2013 IEEE international conference on acoustics, speech and
  signal processing}, pages 6645--6649. Ieee, 2013.

\bibitem[Hamilton et~al.(2017)Hamilton, Ying, and Leskovec]{graphsage}
W.~Hamilton, R.~Ying, and J.~Leskovec.
\newblock Inductive representation learning on large graphs.
\newblock In \emph{Advances in Neural Information Processing Systems}, 2017.

\bibitem[He et~al.(2016)He, Zhang, Ren, and Sun]{he2016deep}
Kaiming He, Xiangyu Zhang, Shaoqing Ren, and Jian Sun.
\newblock Deep residual learning for image recognition.
\newblock In \emph{Proceedings of the IEEE conference on computer vision and
  pattern recognition}, pages 770--778, 2016.

\bibitem[Hooker et~al.(2019)Hooker, Erhan, Kindermans, and
  Kim]{hooker2019benchmark}
Sara Hooker, Dumitru Erhan, Pieter-Jan Kindermans, and Been Kim.
\newblock A benchmark for interpretability methods in deep neural networks.
\newblock \emph{Advances in neural information processing systems}, 32, 2019.

\bibitem[Hu et~al.(2020)Hu, Fey, Zitnik, Dong, Ren, Liu, Catasta, and
  Leskovec]{ogb}
Weihua Hu, Matthias Fey, Marinka Zitnik, Yuxiao Dong, Hongyu Ren, Bowen Liu,
  Michele Catasta, and Jure Leskovec.
\newblock Open graph benchmark: Datasets for machine learning on graphs.
\newblock In \emph{arXiv}, 2020.

\bibitem[Kapishnikov et~al.(2019)Kapishnikov, Bolukbasi, Vi{\'e}gas, and
  Terry]{kapishnikov2019xrai}
Andrei Kapishnikov, Tolga Bolukbasi, Fernanda Vi{\'e}gas, and Michael Terry.
\newblock Xrai: Better attributions through regions.
\newblock In \emph{Proceedings of the IEEE/CVF International Conference on
  Computer Vision}, pages 4948--4957, 2019.

\bibitem[Kapishnikov et~al.(2021)Kapishnikov, Venugopalan, Avci, Wedin, Terry,
  and Bolukbasi]{kapishnikov2021guided}
Andrei Kapishnikov, Subhashini Venugopalan, Besim Avci, Ben Wedin, Michael
  Terry, and Tolga Bolukbasi.
\newblock Guided integrated gradients: An adaptive path method for removing
  noise.
\newblock In \emph{Proceedings of the IEEE/CVF Conference on Computer Vision
  and Pattern Recognition}, pages 5050--5058, 2021.

\bibitem[Kipf and Welling(2017)]{kipf}
Thomas Kipf and Max Welling.
\newblock Semi-supervised classification with graph convolutional networks.
\newblock In \emph{International Conference on Learning Representations}, 2017.

\bibitem[Kleindessner et~al.(2019)Kleindessner, Awasthi, and
  Morgenstern]{kleindessner2019fair}
Matth{\"a}us Kleindessner, Pranjal Awasthi, and Jamie Morgenstern.
\newblock Fair k-center clustering for data summarization.
\newblock In \emph{International Conference on Machine Learning}, pages
  3448--3457. PMLR, 2019.

\bibitem[Krause and Golovin(2014)]{krause2014submodular}
Andreas Krause and Daniel Golovin.
\newblock Submodular function maximization.
\newblock \emph{Tractability}, 3:\penalty0 71--104, 2014.

\bibitem[Krizhevsky et~al.(2017)Krizhevsky, Sutskever, and
  Hinton]{krizhevsky2017imagenet}
Alex Krizhevsky, Ilya Sutskever, and Geoffrey~E Hinton.
\newblock Imagenet classification with deep convolutional neural networks.
\newblock \emph{Communications of the ACM}, 60\penalty0 (6):\penalty0 84--90,
  2017.

\bibitem[Li et~al.(2017)Li, Cheng, Wang, Morstatter, Trevino, Tang, and
  Liu]{li2017feature}
Jundong Li, Kewei Cheng, Suhang Wang, Fred Morstatter, Robert~P Trevino,
  Jiliang Tang, and Huan Liu.
\newblock Feature selection: A data perspective.
\newblock \emph{ACM computing surveys (CSUR)}, 50\penalty0 (6):\penalty0 1--45,
  2017.

\bibitem[Liberty and Sviridenko(2017)]{liberty2017greedy}
Edo Liberty and Maxim Sviridenko.
\newblock Greedy minimization of weakly supermodular set functions.
\newblock In \emph{Approximation, Randomization, and Combinatorial
  Optimization. Algorithms and Techniques (APPROX/RANDOM 2017)}. Schloss
  Dagstuhl-Leibniz-Zentrum fuer Informatik, 2017.

\bibitem[Liu et~al.(2023)Liu, Wang, Britton, and Abebe]{abebe2023students}
Lydia~T. Liu, Serena Wang, Tolani Britton, and Rediet Abebe.
\newblock Reimagining the machine learning life cycle to improve educational
  outcomes of students.
\newblock \emph{Proceedings of the National Academy of Sciences}, 120\penalty0
  (9):\penalty0 e2204781120, 2023.

\bibitem[Lundberg and Lee(2017)]{lundberg2017unified}
Scott~M Lundberg and Su-In Lee.
\newblock A unified approach to interpreting model predictions.
\newblock \emph{Advances in neural information processing systems}, 30, 2017.

\bibitem[Lundstrom et~al.(2022)Lundstrom, Huang, and
  Razaviyayn]{lundstrom2022rigorous}
Daniel~D Lundstrom, Tianjian Huang, and Meisam Razaviyayn.
\newblock A rigorous study of integrated gradients method and extensions to
  internal neuron attributions.
\newblock In \emph{International Conference on Machine Learning}, pages
  14485--14508. PMLR, 2022.

\bibitem[Mirrokni and Zadimoghaddam(2015)]{mirrokni2015randomized}
Vahab Mirrokni and Morteza Zadimoghaddam.
\newblock Randomized composable core-sets for distributed submodular
  maximization.
\newblock In \emph{Proceedings of the forty-seventh annual ACM Symposium on
  Theory of Computing}, pages 153--162, 2015.

\bibitem[Nemhauser et~al.(1978)Nemhauser, Wolsey, and
  Fisher]{nemhauser1978analysis}
George~L Nemhauser, Laurence~A Wolsey, and Marshall~L Fisher.
\newblock An analysis of approximations for maximizing submodular set
  functions—i.
\newblock \emph{Mathematical programming}, 14:\penalty0 265--294, 1978.

\bibitem[Plumb et~al.(2018)Plumb, Molitor, and Talwalkar]{plumb2018model}
Gregory Plumb, Denali Molitor, and Ameet~S Talwalkar.
\newblock Model agnostic supervised local explanations.
\newblock \emph{Advances in neural information processing systems}, 31, 2018.

\bibitem[Qi et~al.(2019)Qi, Khorram, and Li]{qi2019visualizing}
Zhongang Qi, Saeed Khorram, and Fuxin Li.
\newblock Visualizing deep networks by optimizing with integrated gradients.
\newblock In \emph{CVPR Workshops}, volume~2, pages 1--4, 2019.

\bibitem[Ribeiro et~al.(2016)Ribeiro, Singh, and Guestrin]{ribeiro2016model}
Marco~Tulio Ribeiro, Sameer~Singh Singh, and Carlos Guestrin.
\newblock Model-agnostic interpretability of machine learning.
\newblock In \emph{Proceedings of the 2016 ICML Workshop on Human
  Interpretability in Machine Learning (WHI 2016)}. New York, NY, 2016.

\bibitem[Saliency(2017)]{saliency}
Saliency.
\newblock Framework-agnostic implementation for state-of-the-art saliency
  methods (xrai, blurig, smoothgrad, and more).
\newblock \url{https://github.com/PAIR-code/saliency}, 2017.

\bibitem[Sanchez-Lengeling et~al.(2020)Sanchez-Lengeling, Wei, Lee, Reif, Wang,
  Qian, McCloskey, Colwell, and Wiltschko]{sanchez2020attribute_gnn}
Benjamin Sanchez-Lengeling, Jennifer Wei, Brian Lee, Emily Reif, Peter Wang,
  Wesley Qian, Kevin McCloskey, Lucy Colwell, and Alexander Wiltschko.
\newblock Evaluating attribution for graph neural networks.
\newblock In \emph{Advances in Neural Information Processing Systems},
  volume~33, pages 5898--5910, 2020.

\bibitem[Sandler et~al.(2018)Sandler, Howard, Zhu, Zhmoginov, and
  Chen]{sandler2018mobilenetv2}
Mark Sandler, Andrew Howard, Menglong Zhu, Andrey Zhmoginov, and Liang-Chieh
  Chen.
\newblock Mobilenetv2: Inverted residuals and linear bottlenecks.
\newblock In \emph{Proceedings of the IEEE conference on computer vision and
  pattern recognition}, pages 4510--4520, 2018.

\bibitem[Sattarzadeh et~al.(2021)Sattarzadeh, Sudhakar, Plataniotis, Jang,
  Jeong, and Kim]{sattarzadeh2021integrated}
Sam Sattarzadeh, Mahesh Sudhakar, Konstantinos~N Plataniotis, Jongseong Jang,
  Yeonjeong Jeong, and Hyunwoo Kim.
\newblock Integrated grad-cam: Sensitivity-aware visual explanation of deep
  convolutional networks via integrated gradient-based scoring.
\newblock In \emph{ICASSP 2021-2021 IEEE International Conference on Acoustics,
  Speech and Signal Processing (ICASSP)}, pages 1775--1779. IEEE, 2021.

\bibitem[Selvaraju et~al.(2017)Selvaraju, Cogswell, Das, Vedantam, Parikh, and
  Batra]{selvaraju2017grad}
Ramprasaath~R Selvaraju, Michael Cogswell, Abhishek Das, Ramakrishna Vedantam,
  Devi Parikh, and Dhruv Batra.
\newblock Grad-cam: Visual explanations from deep networks via gradient-based
  localization.
\newblock In \emph{Proceedings of the IEEE international conference on computer
  vision}, pages 618--626, 2017.

\bibitem[Shapley(1953)]{shapley1953value}
Lloyd~S Shapley.
\newblock A value for n-person games.
\newblock \emph{Contributions to Theory Games (AM-28)}, 1953.

\bibitem[Shrikumar et~al.(2017)Shrikumar, Greenside, and
  Kundaje]{shrikumar2017learning}
Avanti Shrikumar, Peyton Greenside, and Anshul Kundaje.
\newblock Learning important features through propagating activation
  differences.
\newblock In \emph{International conference on machine learning}, pages
  3145--3153. PMLR, 2017.

\bibitem[Simonyan et~al.(2014)Simonyan, Vedaldi, and
  Zisserman]{simonyan2014deep}
K~Simonyan, A~Vedaldi, and A~Zisserman.
\newblock Deep inside convolutional networks: visualising image classification
  models and saliency maps.
\newblock In \emph{Proceedings of the International Conference on Learning
  Representations (ICLR)}. ICLR, 2014.

\bibitem[Smilkov et~al.(2017)Smilkov, Thorat, Kim, Vi{\'e}gas, and
  Wattenberg]{smilkov2017smoothgrad}
Daniel Smilkov, Nikhil Thorat, Been Kim, Fernanda Vi{\'e}gas, and Martin
  Wattenberg.
\newblock Smoothgrad: removing noise by adding noise.
\newblock \emph{arXiv preprint arXiv:1706.03825}, 2017.

\bibitem[Springenberg et~al.(2015)Springenberg, Dosovitskiy, Brox, and
  Riedmiller]{springenberg2015striving}
J~Springenberg, Alexey Dosovitskiy, Thomas Brox, and M~Riedmiller.
\newblock Striving for simplicity: The all convolutional net.
\newblock In \emph{ICLR (workshop track)}, 2015.

\bibitem[Sundararajan and Najmi(2020)]{sundararajan2020many}
Mukund Sundararajan and Amir Najmi.
\newblock The many shapley values for model explanation.
\newblock In \emph{International conference on machine learning}, pages
  9269--9278. PMLR, 2020.

\bibitem[Sundararajan et~al.(2017)Sundararajan, Taly, and
  Yan]{sundararajan2017axiomatic}
Mukund Sundararajan, Ankur Taly, and Qiqi Yan.
\newblock Axiomatic attribution for deep networks.
\newblock In \emph{International conference on machine learning}, pages
  3319--3328. PMLR, 2017.

\bibitem[Wiens and Shenoy(2018)]{wiens2018machine}
Jenna Wiens and Erica~S Shenoy.
\newblock Machine learning for healthcare: on the verge of a major shift in
  healthcare epidemiology.
\newblock \emph{Clinical Infectious Diseases}, 66\penalty0 (1):\penalty0
  149--153, 2018.

\bibitem[Xu et~al.(2020)Xu, Venugopalan, and Sundararajan]{xu2020attribution}
Shawn Xu, Subhashini Venugopalan, and Mukund Sundararajan.
\newblock Attribution in scale and space.
\newblock In \emph{Proceedings of the IEEE/CVF Conference on Computer Vision
  and Pattern Recognition}, pages 9680--9689, 2020.

\bibitem[Yang et~al.(2016)Yang, Cohen, and Salakhutdinov]{planetoid}
Zhilin Yang, William~W Cohen, and Ruslan Salakhutdinov.
\newblock Revisiting semi-supervised learning with graph embeddings.
\newblock In \emph{International Conference on Machine Learning}, 2016.

\bibitem[Yasuda et~al.(2023)Yasuda, Bateni, Chen, Fahrbach, Fu, and
  Mirrokni]{yasuda2023sequential}
Taisuke Yasuda, MohammadHossein Bateni, Lin Chen, Matthew Fahrbach, Gang Fu,
  and Vahab Mirrokni.
\newblock Sequential attention for feature selection.
\newblock \emph{ICLR 2023}, 2023.

\bibitem[Zhang et~al.(2021)Zhang, Ti{\v{n}}o, Leonardis, and
  Tang]{zhang2021survey}
Yu~Zhang, Peter Ti{\v{n}}o, Ale{\v{s}} Leonardis, and Ke~Tang.
\newblock A survey on neural network interpretability.
\newblock \emph{IEEE Transactions on Emerging Topics in Computational
  Intelligence}, 5\penalty0 (5):\penalty0 726--742, 2021.

\end{thebibliography}
\bibliographystyle{plainnat}

\newpage
\appendix

\section{Missing Proofs}

\subsection{Proof of Lemma~\ref{lem:pig_marginal}}
\begin{proof}
By Taylor's theorem, we have
\[ \nabla g(\ww) = \nabla g(\ww_{\{i\}}) + \oHH \ww_{[n]\backslash\{i\}}\,, \]
where $\oHH$ is an average Hessian on the path from $\ww$ to 
$\ww_{\{i\}}$. Then,
\begin{align*}
\int_{t=0}^1 \nabla_i g(t\onev) dt 
 =
\int_{t=0}^1 \nabla_i g(t\onev_i) dt 
+ \int_{t=0}^1 \onev_i^\top \oHH(t) t(\onev_{[n]\backslash\{i\}}) dt 
\end{align*}
Now, we know that 
$\left|\onev_i^\top \oHH(t)(\onev_{[n]\backslash\{i\}})\right|
\leq K$, and
$\int_{t=0}^1 \nabla_i g(t\onev_i) dt
= g(\onev_i) - g(\zerov)$, so
\begin{align*}
\left|\int_{t=0}^1 \nabla_i g(t\onev) dt 
- (g(\onev_i) - g(\zerov))\right| \leq K/2\,.
\end{align*}
However, by the non-correlation property, this implies that
\[ \nabla_S g(\ww) = \nabla_S g(\ww_S) \]
As a result, we have that
\begin{align*}
\langle \onev, \aa_S \rangle
& = \langle \onev, -\int_{t=0}^1 \nabla_S g(t\onev) dt\rangle \\
& = \langle \onev, -\int_{t=0}^1 \nabla_S g(t\onev_S) dt\rangle \\
& = g(\zerov) - g(\onev_S)\,,
\end{align*}
which completes the proof.
\end{proof}

\subsection{Integrated gradients for linear regression}
\begin{lemma}
We consider the function
$g(\ww) = -\left\|\AA(\xx\odot\ww) - \bb\right\|^2$, where
$\ww\in[0,1]^n$,
$\AA\in\mathbb{R}^{m\times n}$, $\bb\in\mathbb{R}^m$
and $\xx$ is the optimal solution of the linear regression
problem $\min_{\xx}\, \left\|\AA\xx -\bb\right\|^2$.
Then, the integrated gradient scores are given by
\[ \int_{t=0}^1 \nabla g(t\onev) dt = \xx \odot \nabla g(\zerov)\,. \]
\label{lem:pig_linear_regression}
\end{lemma}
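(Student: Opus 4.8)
The plan is to route the computation through the first-order optimality condition for $\xx$, after which everything is calculus. First I would unfold $g$ as a composition through the Hadamard product: writing $h(\zz) = -\|\AA\zz - \bb\|^2$ we have $g(\ww) = h(\xx\odot\ww)$, so the chain rule yields $\nabla g(\ww) = \xx\odot\nabla h(\xx\odot\ww) = -2\,\diag(\xx)\,\AA^\top(\AA(\xx\odot\ww) - \bb)$.

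Next I would restrict to the integration path $\ww = t\onev$, along which $\xx\odot(t\onev) = t\xx$, so that $\nabla g(t\onev) = -2\,\diag(\xx)\,\AA^\top(t\,\AA\xx - \bb) = -2t\,\diag(\xx)\,\AA^\top\AA\xx + 2\,\diag(\xx)\,\AA^\top\bb$. This is the one non-routine step: since $\xx$ minimizes $\|\AA\xx - \bb\|^2$, it satisfies the normal equations $\AA^\top\AA\xx = \AA^\top\bb$; substituting this collapses the quadratic-in-$t$ term and leaves an integrand affine in $t$, namely $\nabla g(t\onev) = 2(1-t)\,\diag(\xx)\,\AA^\top\bb$.

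Then I would integrate termwise. Since $\int_{t=0}^1 2(1-t)\,dt = 1$, this gives $\int_{t=0}^1 \nabla g(t\onev)\,dt = \diag(\xx)\,\AA^\top\bb = \xx\odot(\AA^\top\bb)$. Finally, evaluating the gradient at the zero baseline identifies $\AA^\top\bb$ with (a fixed scalar multiple of) $\nabla g(\zerov)$, so the integrated-gradient vector is precisely $\xx$ acting coordinatewise on the baseline gradient, which is the claimed identity.

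The only genuine obstacle is recognizing that optimality of $\xx$ is exactly the ingredient that linearizes the integrand; once the normal equations are in hand, the rest is bookkeeping. I would also emphasize the qualitative moral the surrounding discussion is building toward: in this least-squares setting the $i$-th integrated-gradient score is simply $x_i\,(\AA^\top\bb)_i$, whose sign is controlled by $(\AA^\top\bb)_i$ and not by $x_i$. Hence a feature whose optimal coefficient $x_i$ is strictly positive can still receive a negative attribution, and by replicating such a feature one can reorder the scores at will --- precisely the failure modes of one-shot integrated gradients that motivate the adaptive Greedy PIG correction.
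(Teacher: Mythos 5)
Your proposal follows essentially the same route as the paper's proof: differentiate through the Hadamard reparametrization, restrict to the path $t\onev$, use the normal equations $\AA^\top\AA\xx=\AA^\top\bb$ (equivalently $\xx=(\AA^\top\AA)^{+}\AA^\top\bb$) to collapse the integrand to $2(1-t)\,\xx\odot(\AA^\top\bb)$, and integrate the affine factor to obtain $\int_{0}^{1}\nabla g(t\onev)\,dt=\xx\odot(\AA^\top\bb)$. That is exactly the structure of the paper's argument (whose displayed algebra in fact contains a couple of sign/$\diag(\xx)$ typos that your version avoids), so in approach and in the core computation you match the paper.

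The one place to be careful is your final identification: you claim $\AA^\top\bb$ is ``a fixed scalar multiple of $\nabla g(\zerov)$.'' For $g$ exactly as defined, the chain rule gives $\nabla g(\zerov)=2\,\xx\odot(\AA^\top\bb)$, a \emph{coordinatewise} rescaling by $2\xx$ rather than a scalar multiple; read literally, your computation yields $\int_{0}^{1}\nabla g(t\onev)\,dt=\tfrac12\nabla g(\zerov)$, and the stated form $\xx\odot\nabla g(\zerov)$ only emerges if $\nabla g(\zerov)$ is interpreted as the baseline gradient of the unreparametrized least-squares objective in the feature variable (the paper's own proof makes this same implicit identification, dropping the extra $\diag(\xx)$ factor, and the exact constant depends on how the loss is normalized). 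So your write-up inherits the same imprecision as the paper rather than introducing a new error, and your qualitative conclusion---that the $i$-th score is $x_i(\AA^\top\bb)_i$, so signs and orderings are governed by $\AA^\top\bb$ and can be manipulated by feature replication---is the correct reading; just replace the ``scalar multiple'' phrasing with the explicit coordinatewise statement.
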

\begin{proof}
We note that, for any $t\in[0,1]$, and using the 
known fact that
$\xx = (\AA^\top \AA)^+ \AA^\top \bb$, we have
\begin{align*}
\nabla g(t\onev)
& = 2 \XX^\top \AA^\top (\AA t\xx - \bb) \\
& = 2 t \XX^\top \AA^\top \bb - 2  \AA^\top \bb\\
& = -2(1-t) \XX^\top \AA^\top \bb\\
& = 2(1-t) \xx \odot \nabla g(\zerov)\,,
\end{align*}
where $\XX = \mathrm{diag}(\xx)$. Then, we conclude that
\begin{align*}
\int_{t=0}^1 \nabla g(t\onev) dt
& = 2 \xx \odot \nabla g(\zerov) \int_{t=0}^1 (1-t) dt \\
& = \xx \odot \nabla g(\zerov)\,. \qedhere
\end{align*}
\end{proof}

\section{Experimental Details and Additional Experiments}

\subsection{Explaining GNN predictions}

For these experiments, we used GCN model pretrained on ogbn-arxiv, circa \S\ref{sec:graph_compression}. In these experiments, we want to select graph edges that explain classification of one node (contrast to graph compression \S\ref{sec:graph_compression} where we select edges that maintain classification of \textit{\textbf{all}} nodes).
Given node node $i \in [m]$, we want to select neighbors of $i$, as well as their neighbors, and their neighbors, \dots, up-to the depth of the trained GCN (we used 3 GCN layers), that would make the GCN model prediction unchanged as compared to the full subgraph around node $i$. The gradient orcale was modified to only return nonzero gradients to edges connecting already-discovered nodes.

Figure \ref{fig:explain_gnn} shows a qualitative evaluation of GreedyPIG. One could argue: an article should only cite another if it is related. However, the degree of relatedness can vary. When explaining GNN predictions, we hope the explanation method to select a subgraph that is very related to the center node. The figure shows that random edges around a center node can be less-related to the center node, than if the edges were chosen using GreedyPIG.
In the top row of Fig.~\ref{fig:explain_gnn}, MixHop paper and blue neighbors are  related to GNNs and message passing (MP), whereas red nodes include embedding methods (not MP), or non-GNN applications. In the bottom row, we see that the random edge selection quickly diverged to articles within the NLP domain and otherwise unrelated applications of DeepWalk.

\begin{figure*}
    \centering
    \begin{subfigure}[b]{0.48\textwidth}
        \centering
        \includegraphics[height=2.8cm]{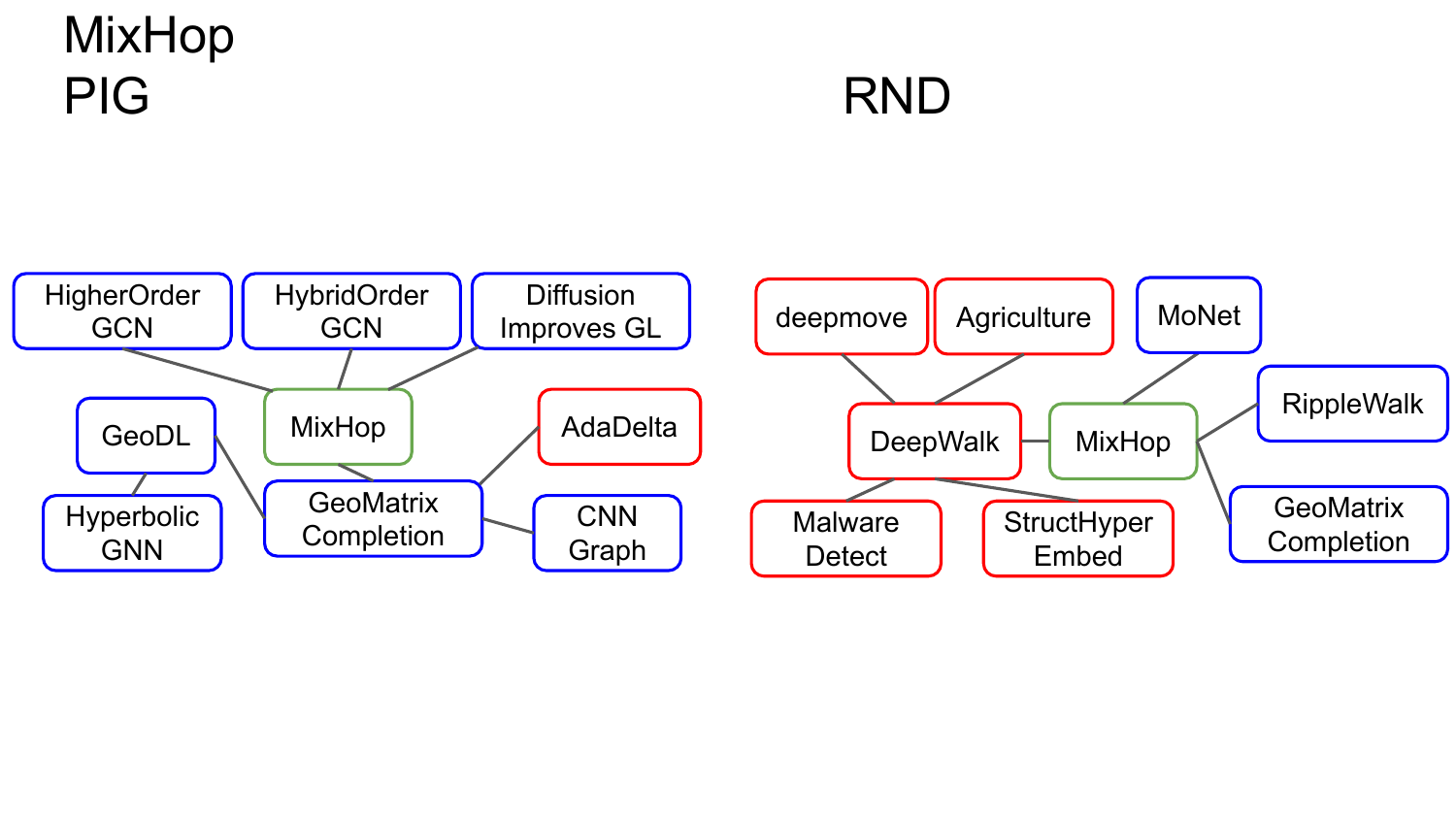}
        \caption{GreedyPIG explains \textit{MixHop} classification.}
    \end{subfigure}
    \begin{subfigure}[b]{0.48\textwidth}
        \centering
        \includegraphics[height=2.8cm]{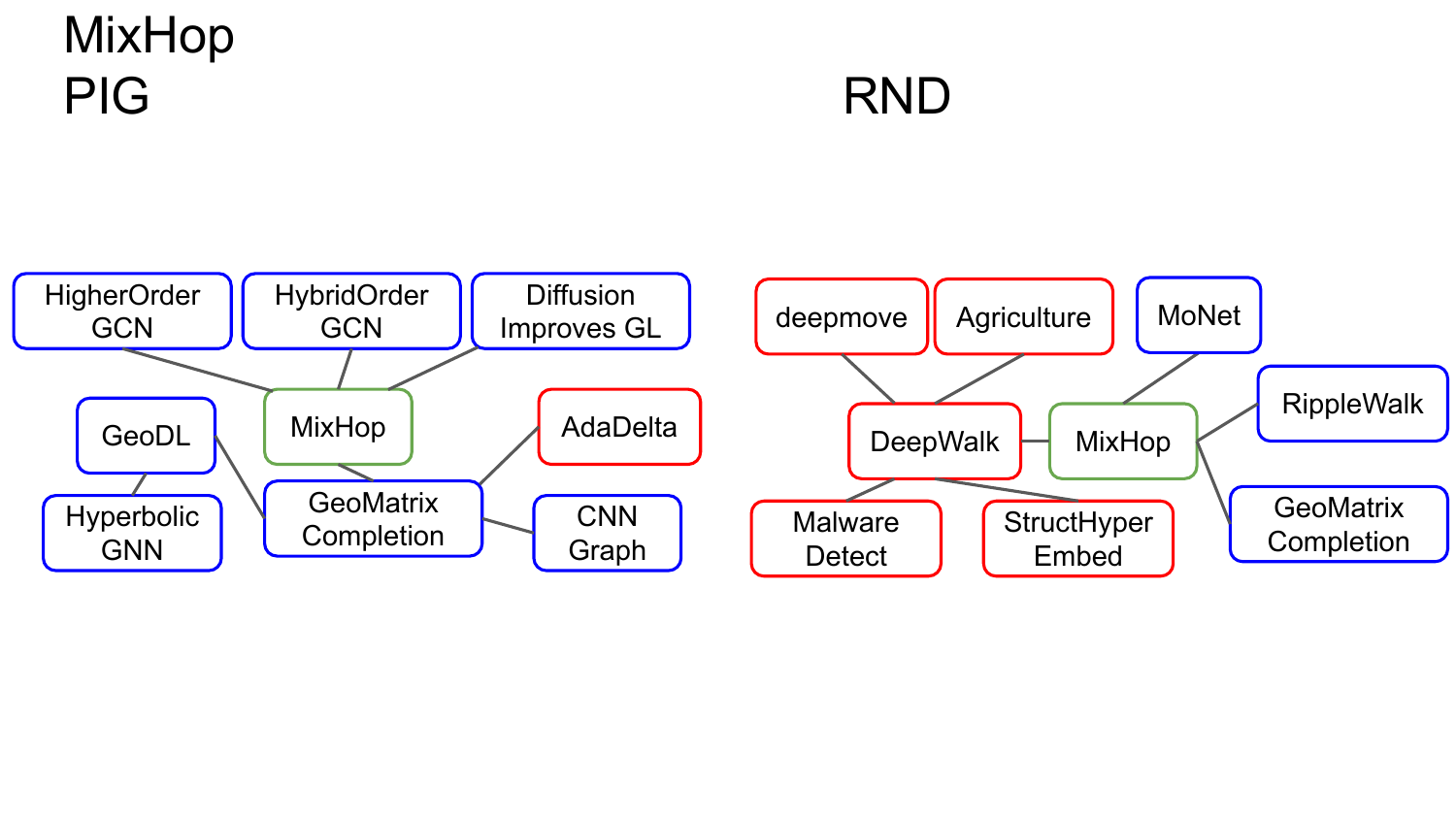}
        \caption{Explaining w/ random edges.}
    \end{subfigure}
    \begin{subfigure}[b]{0.48\textwidth}
        \centering
        \vspace{0.3cm}
        \includegraphics[height=4cm]{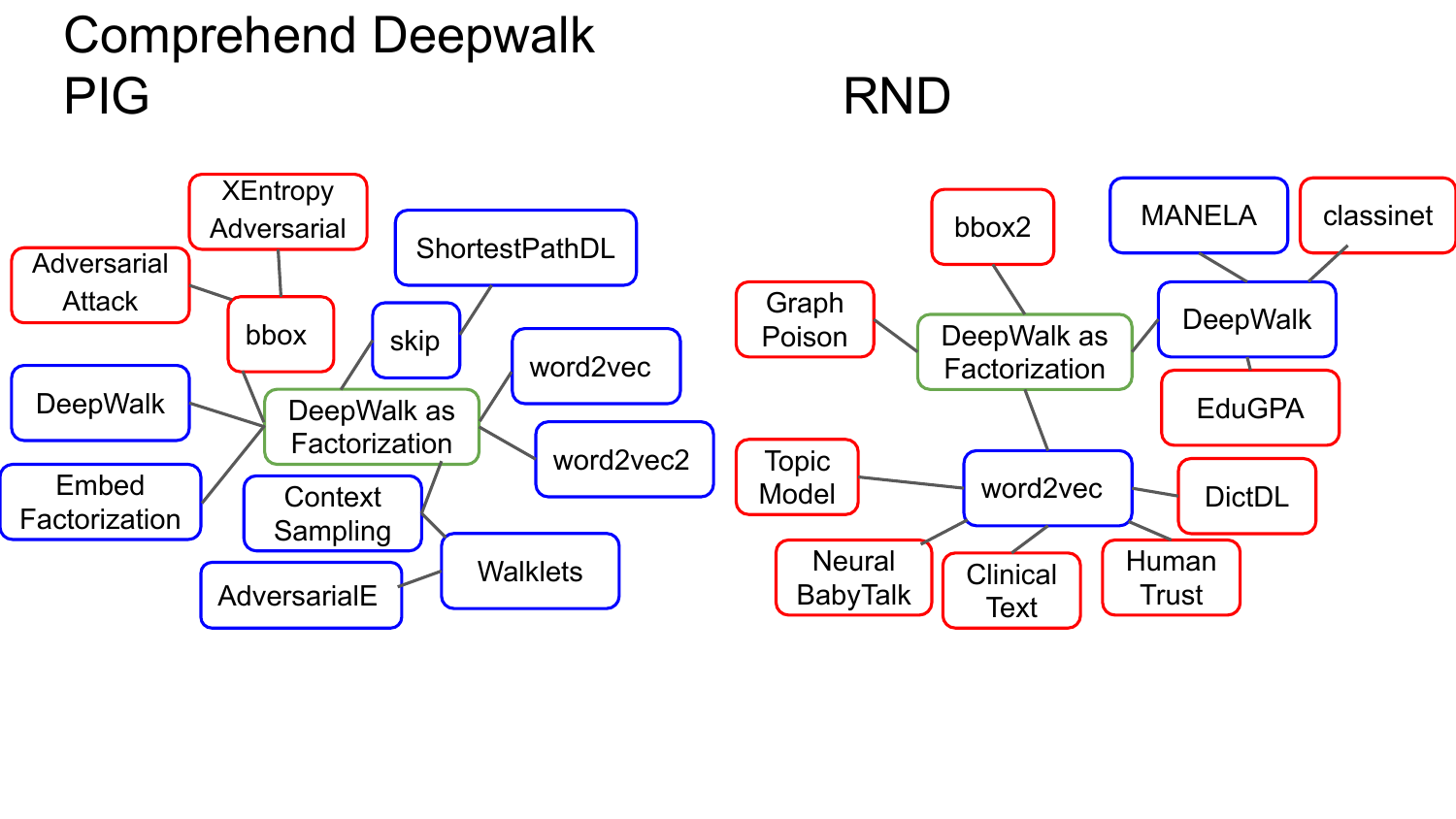}
        \caption{GreedyPIG explains \textit{DeepWalk as ...} classification.}
    \end{subfigure}
    \begin{subfigure}[b]{0.48\textwidth}
        \centering
        \vspace{0.3cm}
        \includegraphics[height=4cm]{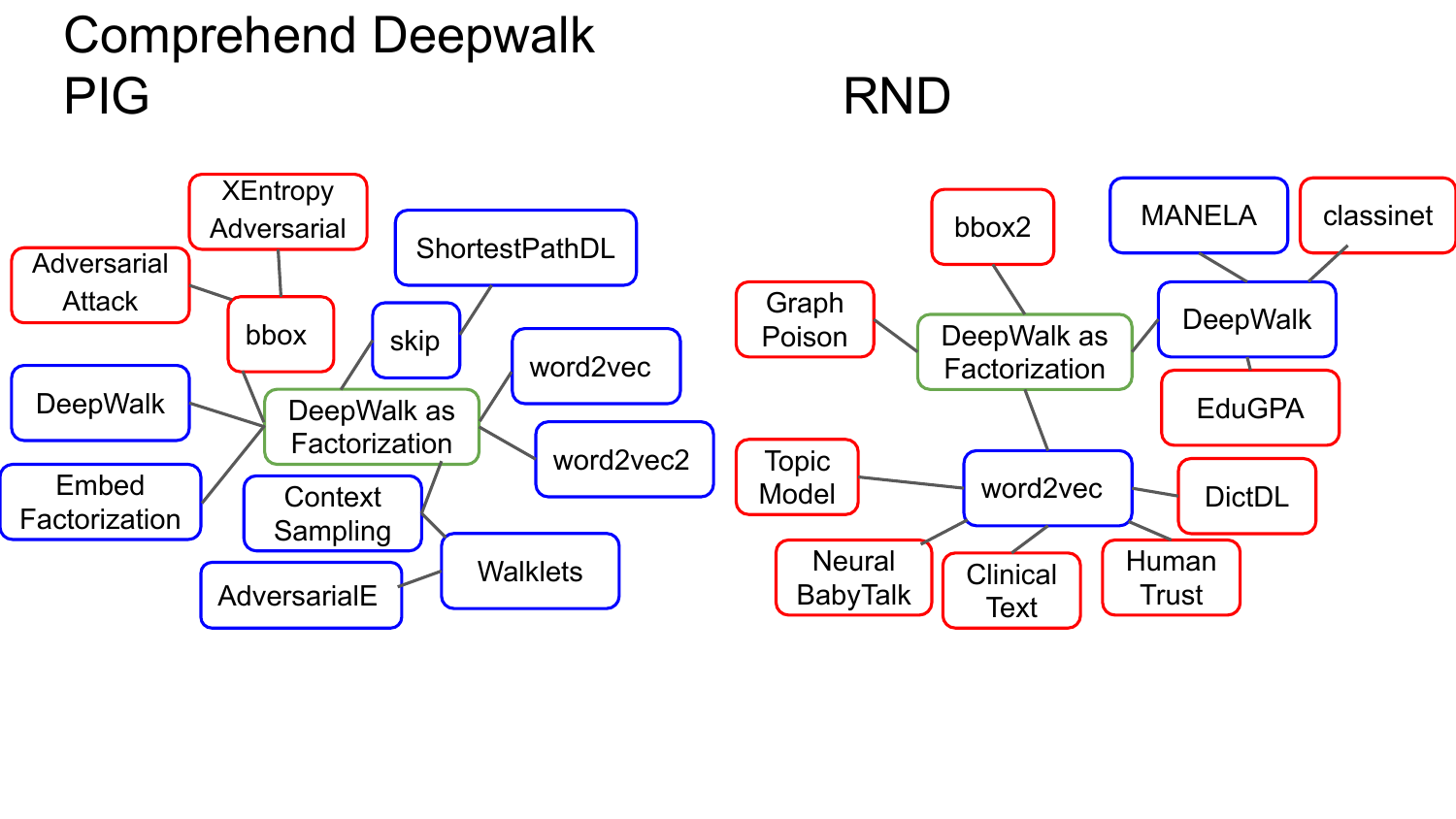}
        \caption{Explaining w/ random edges.}
    \end{subfigure}
    \caption{Pre-trained Graph Neural Network (GNN) is asked to predict article category of center node (green). GreedyPIG (left column) was used to select subgraph around center node that maintains the GNN prediction and we compared it to selecting random adjacent edges. We manually labeled the selected nodes as \textit{strongly-related} (blue) and \textit{less-related} (red).}
    \label{fig:explain_gnn}
\end{figure*}

Finally, we shortcut the paper names to reduce visual clutter. For completeness, the full titles of the papers, as appearing in ogbn-arxiv dataset \citep{ogb}, are as follows.
\begin{itemize}[-]
    \item
    \textbf{DeepWalk as Factorization}: comprehend deepwalk as matrix factorization.
    \item \textbf{MixHop}: mixhop higher order graph convolutional architectures via sparsified neighborhood mixing.
    \item \textbf{DeepWalk}: deepwalk online learning of social representations.
    \item 
  \textbf{Context Sampling}: vertex context sampling for weighted network embedding
  \item 
  \textbf{word2vec}: efficient estimation of word representations in vector space
  \item 
   \textbf{word2vec2}: distributed representations of words and phrases and their compositionality
  \item
  \textbf{skip}:
  don t walk skip online learning of multi scale network embeddings
  \item
  \textbf{bbox}:
    a restricted black box adversarial framework towards attacking graph embedding models 
    \item
    \textbf{AdversarialAttack}:
    threat of adversarial attacks on deep learning in computer vision a survey
  \item
  \textbf{ShortestPathDL}: shortest path distance approximation using deep learning techniques.
  \item
  \textbf{Walklets}: walklets multiscale graph embeddings for interpretable network classification
  \item
  \textbf{AdversarialE}:
  learning graph embedding with adversarial training methods
  \item
  \textbf{Embed Factorization}:
  network embedding as matrix factorization unifying deepwalk line pte and node2vec.
  \item \textbf{bbox}:
  a restricted black box adversarial framework towards attacking graph embedding models
  \item \textbf{XEntropyAdvers}:
  cross entropy loss and low rank features have responsibility for adversarial examples"
  \item
   \textbf{bbox2}:
   the general black box attack method for graph neural networks
  \item
  \textbf{HumanTrust}:
  the transfer of human trust in robot capabilities across tasks.
  \item \textbf{Neural BabyTalk}:
  neural baby talk.
  \item
  \textbf{DictDL}:
  integrating dictionary feature into a deep learning model for disease named entity recognition.
  \item \textbf{MANELA}:
  manela a multi agent algorithm for learning network embeddings.
  \item
  \textbf{ClinicalText}:
  clinical text generation through leveraging medical concept and relations.
  \item
  \textbf{EduGPA}:
  will this course increase or decrease your gpa towards grade aware course recommendation.
  \item
  \textbf{GeoMatrixCompletion}: convolutional geometric matrix completion.
  \item
  \textbf{GeoDL}: geometric deep learning going beyond euclidean data.
  \item
  \textbf{AdaDelta}:
  adadelta an adaptive learning rate method.
  \item
  \textbf{HigherOrderGCN}:
  higher order weighted graph convolutional networks.
  \item
  \textbf{HybridOrderGCN}:
  hybrid low order and higher order graph convolutional networks.
  \item \textbf{CNNGraph}:
  deep convolutional networks on graph structured data.
  \item \textbf{HyperbolicGNN}: hyperbolic graph neural networks.
  \item \textbf{deepmove}: deepmove learning place representations through large scale movement data.
  \item \textbf{Agriculture}: cultivating online question routing in a question and answering community for agriculture.
  \item \textbf{RippleWalk}: ripple walk training a subgraph based training framework for large and deep graph neural network.
  \item \textbf{MalwareDetect}: aidroid when heterogeneous information network marries deep neural network for real time android malware detection.
  \item \textbf{MoNet}: monet debiasing graph embeddings via the metadata orthogonal training unit.
  \item
  \textbf{StructHyperEmbed}:
  structural deep embedding for hyper networks.
\end{itemize}

\subsection{Feature attribution on images}

For the experiments in Section~\ref{sec:feature_attribution}, we 
used the MobilenetV2 image classification network pretrained
on Imagenet, using the implementation from the tf.keras 
library. To implement the algorithms from previous work that
we used in comparisons, i.e. integrated gradients and guided
integrated gradients, we used~\citet{saliency}, which is an
collection of implementations of state of the art saliency
methods. In order to generate Figure~\ref{fig:sic}, we first
took a random sample of Imagenet examples from various classes,
and for each sample we first applied the preprocessing routine
used in MobilenetV2, which includes centering and resizing to
$224\times 224$ pixels with $3$ color channels. This 
$224\times 224 \times 3$ tensor of $3$ dimensions is the input
to the neural network, and as a result it is also the shape of
the attribution map. 

We ran different attribution algorithms 
and sorted the absolute values of attribution values.
Then, we picked $100$ equally spaced values for $k$,
from $0$ to $224\cdot 224\cdot 3$, and 
generated an image using only the top-$k$ attribution values
(and replacing the rest by the baseline, which in this case is
a gray image). For guided PIG and SmoothGrad, we used the default settings from
the saliency library.

\subsection{Pointing game}

In this section, we consider the \emph{pointing game} defined by
\cite{bohle2021convolutional,bohle2022b}, which is a sanity check for
the usefulness of feature attribution methods on images. In this game,
examples of different classes are stitched together in a grid to form
a new image. Then, this new image is fed to the attribution method,
with the goal to explain one of the four classes. Ideally, the
highest attributions should be concentrated in the quadrant that
corresponds to the selected class. In Figures~\ref{fig:pointing_appendix_1} and \ref{fig:pointing_appendix_2}, we 
see one such example that compares the attributions of integrated
gradients and Greedy PIG.

\begin{figure}
\centering
 \begin{minipage}[t]{0.3\linewidth}
 \centering
  \includegraphics[width=\linewidth]{figures/0_223_65_175_264.png}
  \caption*{Input image}
  \label{fig:image1X}
 \end{minipage}
 
 \begin{minipage}[t]{\linewidth}
  \begin{minipage}[t]{.24\linewidth}
   \includegraphics[width=\linewidth]{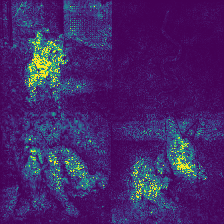}
   \label{fig:image2aX}
  \end{minipage}
  \begin{minipage}[t]{.24\linewidth}
   \includegraphics[width=\linewidth]{figures/pointing_2x2_pig_sea_attributions.png}
   \label{fig:image2bX}
  \end{minipage}
  \begin{minipage}[t]{.24\linewidth}
   \includegraphics[width=\linewidth]{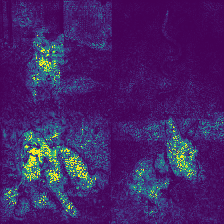}
   \label{fig:image2cX}
  \end{minipage}
  \begin{minipage}[t]{.24\linewidth}
   \includegraphics[width=\linewidth]{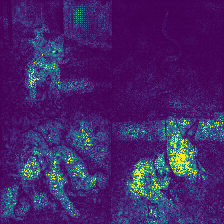}
   \label{fig:image2dX}
  \end{minipage}
  \caption*{Top 15000 Integrated Gradient attributions for each 
  class.}
 \end{minipage}
 
 \begin{minipage}[t]{\linewidth}
  \begin{minipage}[t]{.24\linewidth}
   \includegraphics[width=\linewidth]{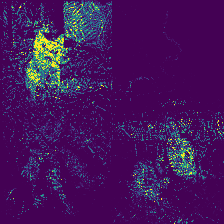}
   \label{fig:image3a}
   \caption*{schipperke}
  \end{minipage}
  \begin{minipage}[t]{.24\linewidth}
   \includegraphics[width=\linewidth]{figures/pointing_2x2_gpig_sea_attributions.png}
   \label{fig:image3b}
   \caption*{sea snake}
  \end{minipage}
  \begin{minipage}[t]{.24\linewidth}
   \includegraphics[width=\linewidth]{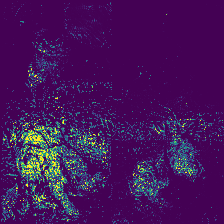}
   \label{fig:image3c}
   \caption*{otterhound}
  \end{minipage}
  \begin{minipage}[t]{.24\linewidth}
   \includegraphics[width=\linewidth]{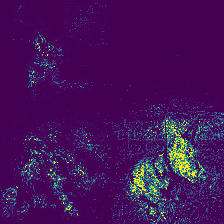}
   \label{fig:image3d}
   \caption*{Cardigan}
  \end{minipage}
  \caption*{Top 15000 Greedy PIG attributions for each 
  class.}
 \end{minipage}
  \caption{The input image is generated as a $2\times 2$ grid of images from
  different classes, here schipperke, sea snake, otterhound and Cardigan.
  Ideally, attributions should be concentrated in the quadrant 
  associated with the respective class.}
\label{fig:pointing_appendix_1}
\end{figure}

\begin{figure}
\centering
 \begin{minipage}[t]{0.3\linewidth}
 \centering
  \includegraphics[width=\linewidth]{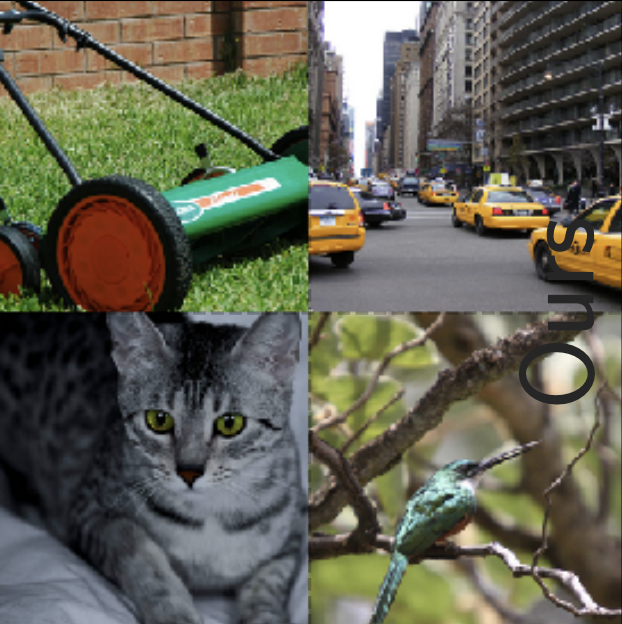}
  \caption*{Input image}
  \label{fig:image1}
 \end{minipage}
 
 \begin{minipage}[t]{\linewidth}
  \begin{minipage}[t]{.24\linewidth}
   \includegraphics[width=\linewidth]{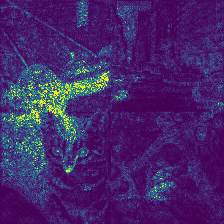}
   \label{fig:image2a}
  \end{minipage}
  \begin{minipage}[t]{.24\linewidth}
   \includegraphics[width=\linewidth]{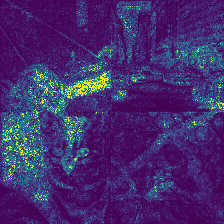}
   \label{fig:image2b}
  \end{minipage}
  \begin{minipage}[t]{.24\linewidth}
   \includegraphics[width=\linewidth]{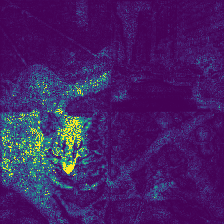}
   \label{fig:image2c}
  \end{minipage}
  \begin{minipage}[t]{.24\linewidth}
   \includegraphics[width=\linewidth]{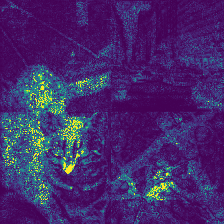}
   \label{fig:image2d}
  \end{minipage}
  \caption*{Top 15000 Integrated Gradient attributions for each 
  class.}
 \end{minipage}
 
 \begin{minipage}[t]{\linewidth}
  \begin{minipage}[t]{.24\linewidth}
   \includegraphics[width=\linewidth]{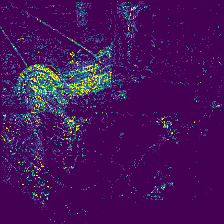}
   \label{fig:image3aX}
   \caption*{lawnmower}
  \end{minipage}
  \begin{minipage}[t]{.24\linewidth}
   \includegraphics[width=\linewidth]{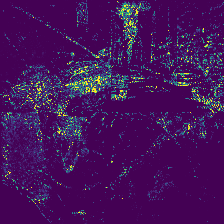}
   \label{fig:image3bX}
   \caption*{cab}
  \end{minipage}
  \begin{minipage}[t]{.24\linewidth}
   \includegraphics[width=\linewidth]{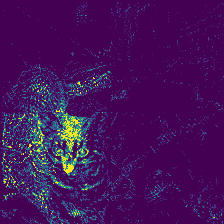}
   \label{fig:image3cX}
   \caption*{Egyptian cat}
  \end{minipage}
  \begin{minipage}[t]{.24\linewidth}
   \includegraphics[width=\linewidth]{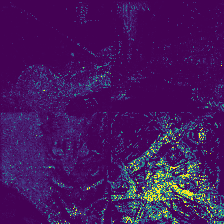}
   \label{fig:image3dX}
   \caption*{jacamar}
  \end{minipage}
  \caption*{Top 15000 Greedy PIG attributions for each 
  class.}
 \end{minipage}
  \caption{The input image is generated as a $2\times 2$ grid of images from
  different classes, here lawnmower, cab, Egyptian cat and jacamar.
  Ideally, attributions should be concentrated in the quadrant 
  associated with the respective class.}
\label{fig:pointing_appendix_2}
\end{figure}

\subsection{Examples}

\begin{figure}[H]
    \centering
    
    \begin{subfigure}[b]{\textwidth}
        \begin{flushright}
        \includegraphics[height=0.2\linewidth,width=0.3\linewidth]{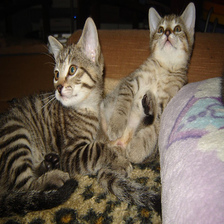}%
        \hspace{0.5cm}
        \includegraphics[height=0.2\linewidth,width=0.3\linewidth]{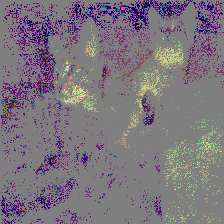}%
        \hspace{0.5cm}
        \includegraphics[height=0.2\linewidth,width=0.3\linewidth]{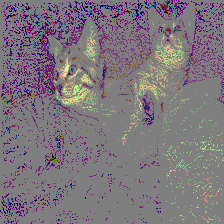}%
        \hfill
        \end{flushright}
        \caption{Top $15000$ 
        features attributed by integrated gradients (left) vs.\
        GreedyPIG (right).
        The softmax output for true class ``tabby''
        \emph{after pruning} unselected features
        is $3\cdot 10^{-5}$ 
        for integrated gradients vs.\ $0.9973$ for GreedyPIG.}
    \end{subfigure}
    \begin{subfigure}[b]{\textwidth}
        \begin{flushright}
            \includegraphics[height=0.2\linewidth,width=0.3\linewidth]{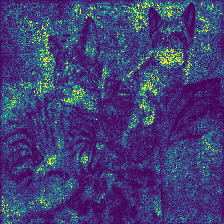}%
            \hspace{0.5cm}
            \includegraphics[height=0.2\linewidth,width=0.3\linewidth]{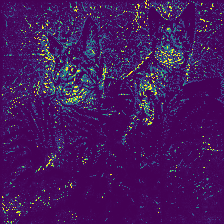}%
            \hfill
        \end{flushright}
        \caption{Heatmap of
        feature attributions by integrated gradients (left)
        vs.\ GreedyPIG (right).
        }
    \end{subfigure}
    \caption{Illustration of attribution algorithms
    on an example from Imagenet labeled ``tabby'' (left column).
    Integrated gradients (IG) (middle)
    ran for $2000$ steps, while GreedyPIG (right)
    ran for $100$ rounds each with $20$ steps.
    Both IG and GreedyPIG maximized softmax objective.}
    \label{fig:tabby}
\end{figure}

\begin{figure}[H]
    \centering
        \includegraphics[width=0.2\linewidth]{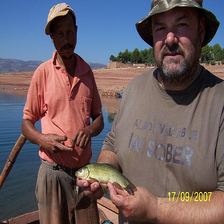}%
        \hspace{0.5cm}
        \includegraphics[width=0.2\linewidth]{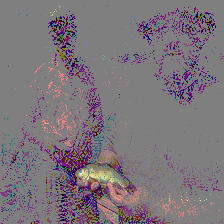}%
        \hspace{0.5cm}
        \includegraphics[width=0.2\linewidth]{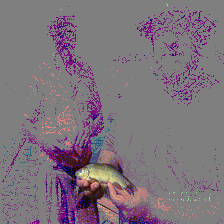}%
    \caption{Illustration of different attribution objectives on an example from Imagenet labeled
    ``tench''.
    Left: Original image. Middle: Top $15000$ 
        feature attributions by Greedy PIG with the softmax objective,
        and Right: with the cross-entropy objective.
    }
    \label{fig:fish}
\end{figure}

\subsection{Block-wise feature attribution}
In the spirit of~\citet{kapishnikov2019xrai}, we augment the greedy PIG 
algorithm with the ability to select patches instead of individual features
from a $3$-dimensional image tensor. In fact, our implementation allows for
specifying arbitrary subset structures that are to be selected as individual features. The results can be found in Figure~\ref{fig:fish2}.

To get into more detail, the approach of~\citet{kapishnikov2019xrai}
is to first compute attribution scores using integrated gradients, and
then iteratively find regions of the image with maximum sum of 
attributions. The main difference from our approach, is that we invoke
the integrated gradient algorithm after \emph{every} block selection,
insteado of just once, in line with our adaptive approach. Specifically,
for a given block size $b\times b$, after each round of greedy PIG we
compute the $b\times b$ patch with the maximum sum of attributions,
and select this patch. It should be noted that selected features have
a score of $0$ in future iterations.

\begin{figure}[H]
    \centering
    \begin{subfigure}[b]{\textwidth}
        \centering
        \includegraphics[width=0.2\linewidth]{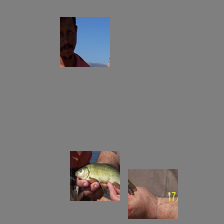}%
        \hspace{0.5cm}
        \includegraphics[width=0.2\linewidth]{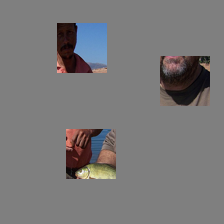}%
        \hspace{0.5cm}
        \includegraphics[width=0.2\linewidth]{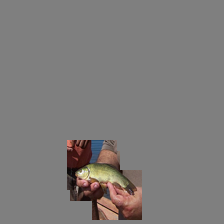}%
        \hspace{0.5cm}
        \includegraphics[width=0.2\linewidth]{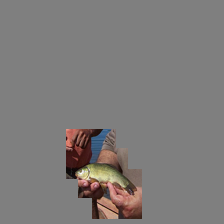}%
        \hfill
        \caption{Block-wise attributions. From left to right: 
       Integrated gradients with softmax,
        integrated gradients with cross entropy,
        greedy PIG with softmax,
        greedy PIG with cross entropy.}
    \end{subfigure}
    \begin{subfigure}[b]{\textwidth}
        \centering
        \includegraphics[width=0.2\linewidth]{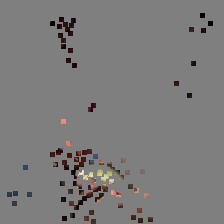}%
        \hspace{0.5cm}
        \includegraphics[width=0.2\linewidth]{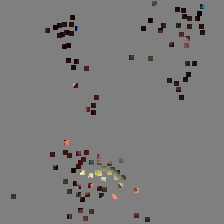}%
        \hspace{0.5cm}
        \includegraphics[width=0.2\linewidth]{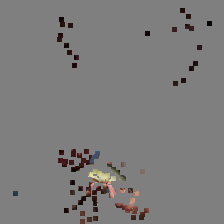}%
        \hspace{0.5cm}
        \includegraphics[width=0.2\linewidth]{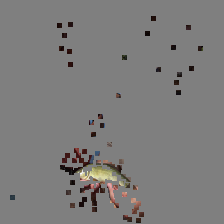}%
        \hfill
        \caption{Block-wise attributions. From left to right: 
       Integrated gradients with softmax,
        greedy PIG with softmax,
        integrated gradients with cross entropy,
        greedy PIG with cross entropy.}
    \end{subfigure}
    \caption{Illustration of different attribution objectives on an example from Imagenet labeled
    ``tench'', with different block sizes.}
    \label{fig:fish2}
\end{figure}
\vspace{-1.00cm}
\begin{figure}[H]
    \centering
    \includegraphics[width=0.45\linewidth]{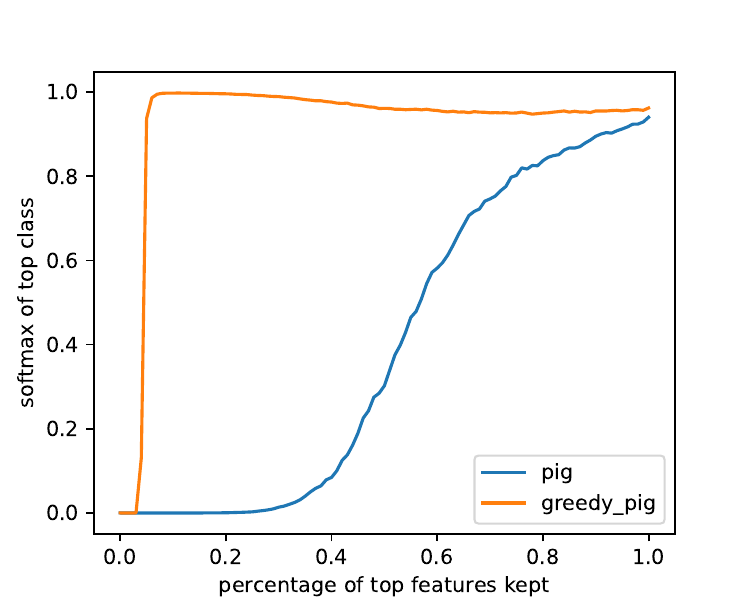}
    \caption{Comparing PIG and Greedy PIG on
    ImageNet attribution using ResNet50.
    The AUC scores for PIG and Greedy PIG are $0.39443$
    and $0.9262$ respectively.}
\end{figure}

\subsection{Feature selection}

For the experiments in Section~\ref{sec:feature_selection}, we used a model
identical to~\citet{yasuda2023sequential}, which is a $3$-hidden layer neural
network with ReLU activations. We implemented minibatch versions of integrated
gradients and greedy PIG. 

\paragraph{Note:} While the attributions computed
by the integrated gradients algorithm can be averaged over the whole dataset
to compute global attribution scores, this is not necessarily the case with the
greedy PIG attribution scores. This is another place where the formulation in
Definition~\ref{def:subset_selection} will be handy. Specifically, given a model
$f(\cdot;\vtheta)$ with parameters $\vtheta$, data $(\XX,\yy)$ and an aggregate 
loss function $\ell$ over the data, 
we define the post-hoc feature selection problem simply by 
$G(S) := - \ell(\yy, f(\XX_S;\vtheta))$.
Note that this is \emph{not} equivalent to averaging the greedy PIG
attributions across the dataset.
While a naive implementation of Algorithm~\ref{alg:greedy_pig} would require multiple
full batch gradient evaluations in each round, we implement a mini-batch version,
which instead samples a number of data batches in each round, and computes the
gradient over these batches.

Specifically, given a number 
$n=150000$ of available data
batches, each of size $512$, 
and a number $g=39$ of gradients to be evaluated (this is the number of
integrated gradient steps), we randomly shuffle the batches and evaluate each
of the gradients on a random set of $n/g\approx 3846$ batches (i.e. the 
gradient is averaged over all these batches). In this way, we can ensure that
all algorithms use the same amount of data, and the algorithms are scalable
enough to be used in large scale settings.

\end{document}